\documentclass[11pt]{article}
\usepackage[margin=0.88in]{geometry}
\usepackage{amsmath,amssymb,amsthm,mathtools}
\usepackage{booktabs}
\usepackage{graphicx}
\usepackage[hidelinks,hypertexnames=false]{hyperref}
\usepackage{natbib}
\usepackage{microtype}
\usepackage{enumitem}
\usepackage{array}
\usepackage{float}
\usepackage{bm}

\hypersetup{
  pdftitle={Why the Third Axis Is Freedom},
  pdfauthor={Michael Timothy Bennett},
  pdfsubject={The formal relation between best-of-K exploration and Stack Theory freedom},
  pdfkeywords={exploration, freedom, generative modelling, generalisation, Stack Theory}
}

\newtheorem{definition}{Definition}
\newtheorem{theorem}{Theorem}
\newtheorem{proposition}{Proposition}
\newtheorem{corollary}{Corollary}

\theoremstyle{remark}
\newtheorem{example}{Example}
\newtheorem{remark}{Remark}

\newcommand{\Ext}[1]{\operatorname{Ext}(#1)}
\newcommand{\Extc}[1]{\operatorname{Ext}_{c}(#1)}
\newcommand{\supp}{\operatorname{supp}}
\newcommand{\essinf}{\operatorname*{ess\,inf}}
\newcommand{\E}{\mathbb{E}}
\newcommand{\Prb}{\mathbb{P}}
\newcommand{\one}{\mathbf{1}}
\newcommand{\dd}{\,\mathrm{d}}
\newcommand{\Interp}{\vspace{3pt}\noindent\textit{Interpretation.}\ }

\title{\bf Why the Third Axis Is Freedom}
\author{Michael Timothy Bennett\\
\small Machine Intelligence and Normative Theory Lab, The Australian National University\\
\small Canberra, ACT 2601, Australia\\
\small \texttt{m@michaeltimothybennett.com}}
\date{1 August 2026}

\begin{document}
\maketitle

\begin{abstract}
In generative training, a model produces an output and is penalised for its difference from an example. With one output per comparison, a model that produces one common answer can outperform a model retaining a broader repertoire. Explorative Modeling (XM) produces $K$ outputs per comparison and updates on the closest, claiming exploration as a ``third pretraining axis'' associated with generative expressivity. Here I show the third axis is actually freedom, meaning the weakness of the constraint implied by a model’s behaviour. Previous work showed freedom is a property of function rather than form. Parameters, architecture, minimum-description-length (MDL), and data can vary while the behavioural constraint remains unchanged. It was formally proved that weakest models are likeliest to generalise, and freedom selection beat MDL by 110-500\% in induction experiments. I prove average XM loss depends on the chance a candidate misses an acceptable region, with exploration raising miss probability to power $K$. For $K>1$, match probability rises with freedom. I then demonstrate empirically that XM optimises for freedom. In a Forward XM experiment, larger $K$ increased or saturated measured freedom, and increased freedom at every tested value under context-dependent targets. I trained XM candidate pools and compared validation selection with a freedom selector that read unlabelled parent contexts. Freedom won in 29 of 30 cases. Generative expressivity is a mode-count proxy for freedom, that discards the extension structure that gives freedom its generalisation significance. XM is a means, freedom an end, and selecting for freedom improved XM under distribution shift.
\end{abstract}

\section{Introduction}

Explorative Modeling draws $K$ candidate generations at each training step, compares them with one datum, and updates only through the closest candidate \citep{gladstone2026explorative}. It studies this update across image, video, and language models, reports gains that rise with model size, data, and compute, and calls the resulting capacity \emph{generative expressivity}, defined as the greatest mode count retained by a loss minimiser over a class of data distributions.

The update belongs to an older minimum-over-candidates family. Multiple Choice Learning trained several deterministic structured predictors under an oracle loss \citep{guzmanrivera2012mcl}. Stochastic Multiple Choice Learning later routed each example's gradient through the lowest-loss member of a deep ensemble \citep{lee2016smcl}. Fan, Su, and Guibas introduced Min-of-N conditional generation in a December 2016 preprint, drawing several Gaussian latent perturbations for each input and training against the generated point cloud closest to the target \citep{fan2017pointset}. Up to notation and output metric, its outer objective has the same conditional stochastic minimum-over-candidates form as hard Forward XM. IMLE followed with a shared global pool of model samples, matching each datum to its nearest generated sample and showing that the resulting estimator coincides with maximum likelihood under stated conditions \citep{li2018imle}. The XM paper classifies IMLE as a shared-pool instance of end-to-end Forward XM. XM's contribution lies in the broader Forward and Reverse framework, integration across modern generative architectures, the generative-expressivity account, and the study of $K$ as a scaling variable \citep{gladstone2026explorative}.

However I will show generative-expressivity is inferior to a pre-existing measure of freedom of function. This notion was first formulated to address the subjective nature of the performance of the AIXI general reinforcement learning agent \cite{hutter2024introduction}. Stack Theory measures freedom of function to avoid AIXI's dependence on form-dependent measures of complexity \cite{bennett2023b}. A learner has observed a training set or ``child task''. It must choose model parameters, which define a ``policy'', and the learner hopes that its chosen policy generalises, to solve a broader parent task. Many policies remain correct on that child, but only a subset will generalise well. Which policy is most likely to remain correct when the task expands to the parent task? The answer is the weakest correct policy, meaning the correct policy whose extension contains the most compatible completions \citep{bennett2025thesis}. Weakness of constraint can be understood more intuitively as freedom. The process of learning the weakest correct policy is dubbed ``w-maxing'' or ``free-maxing'' by previous work, and is usually framed as a measure of function in comparison to measures of form, such as flatness or complexity \cite{bennett2026h,hochreiter1997flat,solomonoff1978,hutter2024introduction,rissanen1978}. This result predates Explorative Modeling, separating generalisation from simplicity and from parameter form \citep{bennett2024b,bennett2026h}.

In addition to being pre-dated by freedom, I will show generative expressivity several relative weaknesses:

\begin{enumerate}
    \item It cannot rank two models trained under the same objective. Freedom can, and this is necessary for anything we might call an axis.
    \item It takes a supremum over minimisers, so it does not describe the model training selected.
    \item It is an objective-level mode-count proxy for freedom. It coincides with a monotone transformation of local freedom only when the embodied language treats output modes as independent symmetric permissions.
    \item It has no theorem connecting it to optimal generalisation.
    \item It can remain fixed while freedom varies substantially.
\end{enumerate} 

However the two lines can be complementary. Over the course of this paper I'm going to show that XM is an excellent means of ``free-maxing'', effectively providing some empirical evidence at scale for a fairly obscure theory that nevertheless has significant implications across not just computer science but philosophy and neurobiology \cite{bennett2024c,bennett2025thesis,sole2026cognitionspaces}. That theory in turn shows how to further improve upon XM, and I demonstrate such an improvement empirically. Exploration records how many candidates training inspects. The weakness of constraints implied by a model's function (freedom) is how much compatible completion volume the trained policy retains \cite{bennett2025thesis}. The first is a budgetary means while the second is an end, a property of function in an embodied language. A policy can permit many outputs, so that its extension factor is large, while placing almost all sampling mass on one permitted output. Another policy can permit fewer outputs and spread mass evenly. Finite best-of-$K$ training can prefer either. The relation therefore passes through probability mass.

I derive that relation in full. The central identity is
\[
R_K(\pi)
=
\int_0^\infty
\E_{c,Y}
\left[
\left(1-\widetilde Q_{\pi,c,Y}(A_t(Y))\right)^K
\right]
\dd t,
\]
where $A_t(Y)$ is the set of generations within loss $t$ of target $Y$, and $\widetilde Q$ is the training candidate law. Exploration applies the transform $1-(1-q)^K$ to acceptable candidate mass. When the training candidate law equals the deployed law, that mass lies on outputs permitted by the deployed policy. The count of those permissions determines freedom in the finite language constructed below.

I establish the following results.
\begin{enumerate}[leftmargin=*,label=\arabic*.]
    \item I derive the exact best-of-$K$ coverage identity for arbitrary measurable output spaces, nonnegative losses, and target-dependent training candidate laws.
    \item I show that the hard objective converges to the essential loss infimum of the training candidate law. Under deployment consistency and a distance loss, this is distance from deployed support.
    \item I construct a finite generative Stack Theory language in which a stochastic generator induces a policy whose weakness of implied constraints (freedom) has a closed form.
    \item I show that best-of-$K$ success equals expected distinct output coverage under uniform targets.
    \item I prove that balanced sampled coverage rises strictly with local freedom for every $K>1$. For a fixed target-support size, local freedom increases the marginal return to another candidate.
    \item I solve the finite nonuniform problem. Finite $K$ favours common targets. The optimum converges to uniform mass over full target support as $K$ grows.
    \item I define independent nonempty unseen demands and prove that future compatibility is exactly proportional to unseen freedom.
    \item I isolate deterministic $K$-head models as a special case. Only that special case has a support ceiling of $K$.
    \item I test whether sample-based Forward XM converts larger $K$ into greater freedom across 144 neural runs. It does.
    \item I test whether a prospectively calibrated freedom selector improves balanced-parent performance when both selectors read the same XM-trained candidate pool. It does.
\end{enumerate}

The result is a formal division of labour. Parameters and data may enlarge the permitted region a model can embody, while deployment-consistent exploration controls how strongly training rewards sampled access to that region. The induced policy freedom is a function-level quantity in the chosen embodied language. This yields a testable explanation for the reported scaling interaction without treating $K$, mode count, and freedom as synonyms.

\section{Weakness, now rebranded as freedom}

I use the Stack Theory definitions from my PhD thesis and associated peer-reviewed papers \citep{bennett2025thesis}. Because people hated the name weakness, I've been changing it to freedom, and changing w-maxing to ``free-maxing''.

\begin{definition}[Environment and embodied language]
An environment is a nonempty set $\Phi$ of mutually exclusive states. A program is any set $p\subseteq\Phi$, and $\mathcal P:=2^\Phi$ is the set of all programs. A vocabulary is a finite set $\mathfrak v\subseteq\mathcal P$. It induces the embodied language
\[
L_{\mathfrak v}
:=
\left\{
 l\subseteq\mathfrak v
 \ \middle|\
 \bigcap_{p\in l}p\neq\varnothing
\right\}.
\]
The members of $L_{\mathfrak v}$ are statements. By convention, the intersection of the empty family is $\Phi$, so $\varnothing\in L_{\mathfrak v}$.
\end{definition}

\Interp A vocabulary is the finite set of distinctions a body can implement at the chosen layer. A statement is a bundle of distinctions that can hold together. The satisfiability condition removes combinations the body cannot instantiate.

\begin{definition}[Extension and freedom]
For $l\in L_{\mathfrak v}$, define
\[
\Ext{l}
:=
\{y\in L_{\mathfrak v}\mid l\subseteq y\}.
\]
The freedom of $l$ is
\[
w(l):=|\Ext{l}|.
\]
\end{definition}

\Interp A completion adds commitments while retaining every commitment in $l$. Freedom counts how many such refinements remain possible. It counts embodied completions rather than parameter settings or a version space.

\begin{definition}[Task and correct policy]
For $X\subseteq L_{\mathfrak v}$, write $\Ext{X}:=\bigcup_{x\in X}\Ext{x}$. A task is a pair $\alpha=\langle I_\alpha,O_\alpha\rangle$ where $I_\alpha\subseteq L_{\mathfrak v}$ and $O_\alpha\subseteq\Ext{I_\alpha}$. A policy is any statement $\pi\in L_{\mathfrak v}$. It is correct for $\alpha$ exactly when
\[
\Ext{I_\alpha}\cap\Ext{\pi}=O_\alpha.
\]
Write $\Pi_\alpha$ for the set of correct policies.
\end{definition}

\Interp Correctness fixes the policy set to the observed task. Freedom compares the completion volume retained by different policies after that requirement has been met. A weakest correct policy is one that maximises freedom within the bounds of correctness. It maintains the integrity of a system without imposing unnecessary constraints.

For continuous languages, Stack Theory replaces counting by a measure $\mu$ and defines $w_\mu(\pi):=\mu(\Ext{\pi})$ \citep{bennett2026h,bennett2025thesis}. The finite theory below uses counting measure. The general best-of-$K$ identity in Section~\ref{sec:general} does not require finiteness.

\subsection{A generative language}

The map from a finite support profile to freedom is perfected\footnote{Well, it becomes ``exact'' but I am loathe to use that word after the LLMs ruined it for me.} after the language states which outputs a generator permits.

\begin{definition}[Permission profiles]
Let $C$ be a finite nonempty set of contexts and let $Y$ be a finite output set with $|Y|\ge 2$. A permission profile is a map
\[
F:C\longrightarrow 2^Y\setminus\{\varnothing\}.
\]
The value $F(c)$ is the nonempty set of outputs permitted at context $c$.
\end{definition}

\Interp A context can be a prompt, a corrupted image, or a masked sequence. A profile records every output the deployed generator can emit with positive probability at each context.

\begin{definition}[Generative permission language]
Let $\Phi$ be the set of all permission profiles. For every $(c,y)\in C\times Y$, define the exclusion program
\[
e_{c,y}:=\{G\in\Phi\mid y\notin G(c)\}.
\]
Let
\[
\mathfrak v:=\{e_{c,y}\mid c\in C,\ y\in Y\}.
\]
For a profile $F$, define its policy
\[
\pi_F:=\{e_{c,y}\in\mathfrak v\mid y\notin F(c)\}.
\]
\end{definition}

\Interp The policy states every exclusion the generator enforces. A generator that permits more outputs asserts fewer exclusions. It therefore embodies a weaker constraint.

\begin{proposition}[Closed-form freedom]\label{prop:freedom}
The map $F\mapsto\pi_F$ is a bijection from permission profiles to $L_{\mathfrak v}$. If $a_c:=|F(c)|$, then
\[
w(\pi_F)
=
\prod_{c\in C}\left(2^{a_c}-1\right).
\]
\end{proposition}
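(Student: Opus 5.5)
The plan is to identify statements of $L_{\mathfrak v}$ with sets of excluded pairs, show that satisfiable sets are exactly those that leave at least one output permitted at every context, and then count supersets of $\pi_F$ context by context.

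\emph{Step 1: characterise $L_{\mathfrak v}$.} Note first that the programs $e_{c,y}$ are pairwise distinct. Since $|Y|\ge 2$, some profile has $y\notin G(c)$ while another has $y\in G(c)$, which separates $e_{c,y}$ from $e_{c',y'}$ whenever $(c,y)\neq(c',y')$. So a subset $l\subseteq\mathfrak v$ is the same thing as a set $S_l\subseteq C\times Y$ of excluded pairs.

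Next, $\bigcap_{e\in l}e$ is the set of profiles $G$ with $y\notin G(c)$ for every $(c,y)\in S_l$. Such a $G$ exists exactly when, for every context $c$, the fibre $S_l(c):=\{y\mid (c,y)\in S_l\}$ is not all of $Y$. In that case one may take $G(c):=Y\setminus S_l(c)$, which is nonempty. Hence
\[
L_{\mathfrak v}=\{l\subseteq\mathfrak v\mid S_l(c)\neq Y\text{ for all }c\in C\}.
\]
This covers the empty statement, which corresponds to the all-permitting profile.

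\emph{Step 2: bijection.} By definition $S_{\pi_F}(c)=Y\setminus F(c)$, which is a proper subset of $Y$ because $F(c)\neq\varnothing$. Therefore $\pi_F\in L_{\mathfrak v}$. The inverse map sends $l$ to $F_l(c):=Y\setminus S_l(c)$, which is nonempty by Step 1, so $F_l$ is a profile. The identities $\pi_{F_l}=l$ and $F_{\pi_F}=F$ are immediate from the definitions, so $F\mapsto\pi_F$ is a bijection onto $L_{\mathfrak v}$.

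\emph{Step 3: count the extension.} A statement $y\in L_{\mathfrak v}$ lies in $\Ext{\pi_F}$ exactly when $S_y(c)\supseteq Y\setminus F(c)$ and $S_y(c)\neq Y$ for each $c$. Write $S_y(c)=(Y\setminus F(c))\cup T_c$ with $T_c\subseteq F(c)$. The constraint $S_y(c)\neq Y$ then becomes $T_c\neq F(c)$, which leaves $2^{a_c}-1$ admissible choices of $T_c$.

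The constraints at different contexts are independent. The choice of $y$ is determined by the tuple $(T_c)_{c\in C}$, and every admissible tuple gives a satisfiable statement by Step 1. Hence $w(\pi_F)=\prod_{c\in C}(2^{a_c}-1)$.

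The only delicate point is Step 1. We must check that satisfiability of a family of programs is exactly the per-context nonemptiness condition, which is where the requirement that profiles take nonempty values enters. We must also check that distinct pairs give distinct programs, so that counting subsets of $\mathfrak v$ agrees with counting subsets of $C\times Y$; the argument above establishes this. Everything after that is routine counting.
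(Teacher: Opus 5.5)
Your proof is correct and follows essentially the paper's route: satisfiability reduces to per-context nonemptiness, giving the bijection $F(c)=Y\setminus X_c(l)$, and the extension is then counted context by context (your proper subsets $T_c\subsetneq F(c)$ of added exclusions are just the complements of the paper's nonempty $G(c)\subseteq F(c)$). Your check that the programs $e_{c,y}$ are pairwise distinct, which the paper leaves implicit, is a useful addition, but the separating witness should be a single profile, for example $G(c)=Y\setminus\{y\}$ and $G(c'')=Y$ for $c''\neq c$, which lies in $e_{c,y}$ and in no other $e_{c',y'}$.
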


\begin{proof}
A statement $l\subseteq\mathfrak v$ excludes the set
\[
X_c(l):=\{y\in Y\mid e_{c,y}\in l\}
\]
at each context. The programs in $l$ have nonempty intersection exactly when $Y\setminus X_c(l)$ is nonempty for every $c$. Hence statements correspond bijectively to profiles through $F(c)=Y\setminus X_c(l)$.

A statement extends $\pi_F$ exactly when it adds exclusions. Its corresponding profile $G$ therefore obeys
\[
\varnothing\neq G(c)\subseteq F(c)
\]
for every $c$. There are $2^{a_c}-1$ nonempty subsets of $F(c)$, independently across contexts. Go forth and multiply to get the result.
\end{proof}

\Interp Freedom counts every way the generator can commit further without contradicting its present permissions. A context with one permitted output has only one possible refinement left unrefined. A context with three permitted outputs has seven. The product records joint refinement across contexts. ``Completions'' as I so inarticulately put it in earlier work.

A stochastic generator induces conditional laws $Q_c$ on $Y$. I identify its permission profile with
\[
F_Q(c):=\supp Q_c
=
\{y\in Y\mid Q_c(y)>0\}.
\]
Its Stack Theory policy is $\pi_{F_Q}$. Probability mass determines how often exploration reaches each permitted output. Support determines the policy statement. In Proposition~\ref{prop:freedom} I convert the support cardinalities into an extension count.

\begin{table}[t]
\centering
\small
\begin{tabular}{p{2.3cm}p{4.2cm}p{6.2cm}}
\toprule
Object & Definition & Role \\
\midrule
Exploration $K$ & Number of candidate draws inspected in one update & Sampling budget applied during training \\
Generative expressivity $E$ & Greatest mode count retained by an objective minimiser over a data class & Capacity of a training objective \citep{gladstone2026explorative} \\
Freedom $w(\pi)$ & Cardinality or measure of the extension of one trained policy & Compatible completion volume in an embodied language \\
\bottomrule
\end{tabular}
\caption{Three quantities that answer different questions.}
\label{tab:three}
\end{table}

\section{\texorpdfstring{Best-of-$K$ exploration}{Best-of-K exploration}}\label{sec:general}

I now model hard best-of-$K$ training in a form that includes discrete and continuous outputs. I distinguish the law sampled during training from the law used at deployment. This distinction is immaterial for end-to-end XMs, but essential for target-dependent coupling searches.

\begin{definition}[Stochastic generative setting]
Let $(\mathcal C,\mathcal G)$ be a measurable context space and $(\mathcal Y,\mathcal A)$ a measurable output space. Draw a context $c$ from a probability law $\nu$. Let $P$ be a Markov kernel from $\mathcal C$ to $\mathcal Y$, and draw the target $Y$ from $P_c$. Let $\widetilde Q_\pi$ be a Markov kernel from $\mathcal C\times\mathcal Y$ to $\mathcal Y$. Conditional on $(c,Y=y)$, draw an infinite sequence of training candidates $\widehat Y_1,\widehat Y_2,\ldots$ independently from $\widetilde Q_{\pi,c,y}$. Let $Q_\pi$ be the model's deployed Markov kernel from $\mathcal C$ to $\mathcal Y$. I call the exploration \emph{deployment-consistent} when
\[
\widetilde Q_{\pi,c,y}=Q_{\pi,c}
\]
for $\nu(\dd c)P_c(\dd y)$-almost every $(c,y)$. Let
\[
J:\mathcal Y\times\mathcal Y\longrightarrow[0,\infty)
\]
be a measurable loss. The hard best-of-$K$ risk is
\[
R_K(\pi)
:=
\E\left[
\min_{1\le i\le K}J(\widehat Y_i,Y)
\right].
\]
For $t\ge 0$, define the acceptable neighbourhood
\[
A_t(y):=\{\widehat y\in\mathcal Y\mid J(\widehat y,y)\le t\}
\]
and its generator mass
\[
\widetilde q_\pi(c,y,t):=\widetilde Q_{\pi,c,y}(A_t(y)).
\]
\end{definition}

\Interp The loss neighbourhood contains every generation that is good enough at tolerance $t$. The quantity $\widetilde q_\pi(c,y,t)$ is the chance that one training candidate lands there. Exploration repeats that attempt $K$ times. Under deployment consistency, the training candidates are draws from the law used to generate at inference. Otherwise the identity below still holds, but it describes a training-time coupling search rather than sampled access to the deployed policy.

\begin{theorem}[Coverage identity]\label{thm:coverage}
Assume $\E[J(\widehat Y_1,Y)]<\infty$. Then
\[
\boxed{
R_K(\pi)
=
\int_0^\infty
\E_{c,Y}
\left[
\left(1-\widetilde q_\pi(c,Y,t)\right)^K
\right]
\dd t
}.
\]
\end{theorem}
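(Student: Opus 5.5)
The plan is to apply the layer-cake (tail-integral) formula to the nonnegative random variable $M_K:=\min_{1\le i\le K}J(\widehat Y_i,Y)$. Then I condition on $(c,Y)$ and use conditional independence of the candidates to turn the tail probability into a $K$-th power.

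First, $M_K$ is a measurable $[0,\infty)$-valued random variable, since each $J(\widehat Y_i,Y)$ is measurable and a finite minimum of measurable maps is measurable. For any nonnegative random variable $Z$,
\[
\E[Z]=\int_0^\infty \Prb(Z>t)\dd t .
\]
This follows from writing $Z=\int_0^\infty \one\{Z>t\}\dd t$ and applying Tonelli's theorem, which is valid because the integrand is jointly measurable in $(\omega,t)$ and nonnegative. Hence $R_K(\pi)=\int_0^\infty \Prb(M_K>t)\dd t$. Finiteness is not actually needed for the identity as an equality in $[0,\infty]$. Still, $M_K\le J(\widehat Y_1,Y)$, so the assumption $\E[J(\widehat Y_1,Y)]<\infty$ guarantees that both sides are finite and equal.

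Next, fix $t\ge0$. The event $\{M_K>t\}$ equals $\bigcap_{i=1}^K\{J(\widehat Y_i,Y)>t\}=\bigcap_{i=1}^K\{\widehat Y_i\notin A_t(Y)\}$. I condition on $(c,Y=y)$. The candidates are then i.i.d.\ from $\widetilde Q_{\pi,c,y}$, so the conditional probability of this intersection is $(1-\widetilde Q_{\pi,c,y}(A_t(y)))^K=(1-\widetilde q_\pi(c,y,t))^K$. Integrating against $\nu(\dd c)P_c(\dd y)$ gives $\Prb(M_K>t)=\E_{c,Y}[(1-\widetilde q_\pi(c,Y,t))^K]$. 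Substituting this into the layer-cake formula gives the boxed identity.

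The main obstacle is measure-theoretic bookkeeping: I need $(c,y)\mapsto\widetilde q_\pi(c,y,t)$ to be measurable, and the conditioning step to be rigorous. I would handle both by building the joint law explicitly via Ionescu--Tulcea (kernels $\nu$, then $P$, then the product kernel $\widetilde Q_{\pi,c,y}^{\otimes K}$). The set $B_t:=\{(y,\widehat y):J(\widehat y,y)\le t\}$ is measurable in $\mathcal A\otimes\mathcal A$. The standard fact that $x\mapsto\kappa(x,B_x)$ is measurable for a kernel $\kappa$ and a jointly measurable set $B$ then gives measurability of $\widetilde q_\pi$. Writing the probability as an iterated integral of $\one_{B_t^c}$ over the product measure yields the $K$-th power by Fubini. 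Joint measurability in $t$ as well, which Tonelli needs for the outer exchange, holds because $\{(\omega,t):M_K(\omega)>t\}$ is measurable.
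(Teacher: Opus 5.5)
Your proposal is correct and follows the paper's proof. Both apply the tail-integral formula to $\min_{1\le i\le K}J(\widehat Y_i,Y)$ and use conditional independence given $(c,Y)$ to turn the all-miss event into $(1-\widetilde q_\pi(c,Y,t))^K$. Your measurability details (kernel measurability of $\widetilde q_\pi$, joint measurability for Tonelli) and your observation that the integrability hypothesis only ensures finiteness fill in steps the paper leaves implicit, without changing the route.
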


\begin{proof}
Fix $c$, $y$, and $t$. The event
\[
\min_{1\le i\le K}J(\widehat Y_i,y)>t
\]
occurs exactly when every candidate lies outside $A_t(y)$. Independence evaluates that conditional probability as
\[
\left(1-\widetilde Q_{\pi,c,y}(A_t(y))\right)^K.
\]
For any nonnegative integrable random variable $Z$,
\[
\E[Z]=\int_0^\infty\Prb(Z>t)\dd t.
\]
Apply this identity to $Z=\min_iJ(\widehat Y_i,Y)$ and integrate over $c$ and $Y$.
\end{proof}

\begin{remark}[Coupled candidates]
Independence is the only step that produces the power $K$. If the candidates have a joint conditional law $\mathcal Q^{(K)}_{\pi,c,y}$ given $(c,y)$, then
\[
R_K(\pi)
=
\int_0^\infty
\E_{c,Y}
\left[
\mathcal Q^{(K)}_{\pi,c,Y}
\left(
(\mathcal Y\setminus A_t(Y))^K
\right)
\right]
\dd t.
\]
The i.i.d. identity follows when the joint law is $\widetilde Q_{\pi,c,Y}^{\otimes K}$.
\end{remark}

\Interp Candidate diversity changes the joint miss probability even when every marginal law is fixed. The i.i.d. model isolates repeated sampling. A coupled design can lower the joint miss probability without changing its one-candidate marginals.

\Interp Exploration applies the miss function $(1-q)^K$ to every acceptable region. Equivalently, the chance of at least one hit is
\[
g_K(q):=1-(1-q)^K.
\]
For small $q$,
\[
g_K(q)=Kq+O(q^2).
\]
At a fixed candidate law, exploration amplifies small positive mass. A compatible region with zero mass remains inaccessible at every finite $K$. Exploration is leverage on rare permissions rather than a source of new ones.

\begin{corollary}[Marginal value of another candidate]\label{cor:marginal}
Under the assumptions of Theorem~\ref{thm:coverage},
\[
R_K(\pi)-R_{K+1}(\pi)
=
\int_0^\infty
\E_{c,Y}
\left[
\widetilde q_\pi(c,Y,t)
\left(1-\widetilde q_\pi(c,Y,t)\right)^K
\right]
\dd t
\ge 0.
\]
\end{corollary}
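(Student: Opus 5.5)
The plan is to apply Theorem~\ref{thm:coverage} twice, once with $K$ candidates and once with $K+1$, and subtract the two integral representations. Since $\widetilde q_\pi$ is the same function in both, everything reduces to a pointwise algebraic identity followed by a check that the subtraction is legitimate.

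First I check integrability. Theorem~\ref{thm:coverage} assumes $\E[J(\widehat Y_1,Y)]<\infty$. The minimum over $K$ (or $K+1$) candidates is at most $J(\widehat Y_1,Y)$, so both $R_K(\pi)$ and $R_{K+1}(\pi)$ are finite. The identity therefore holds for both, with finite values. This finiteness is what lets me write the difference as a single integral instead of an $\infty-\infty$ expression.

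Next, write $q=\widetilde q_\pi(c,Y,t)\in[0,1]$. Pointwise in $(c,Y,t)$,
\[
(1-q)^K-(1-q)^{K+1}=(1-q)^K\bigl(1-(1-q)\bigr)=q(1-q)^K .
\]
Both integrands are nonnegative and measurable: measurability of $\widetilde q_\pi$ in $(c,y,t)$ follows from the kernel property together with joint measurability of $J$, as already used implicitly in Theorem~\ref{thm:coverage}. Each integral is finite by the previous step. So linearity of the expectation and of the $\dd t$ integral (or Tonelli on the product measure, followed by subtraction of finite quantities) gives
\[
R_K(\pi)-R_{K+1}(\pi)=\int_0^\infty\E_{c,Y}\bigl[q(1-q)^K\bigr]\dd t .
\]

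Nonnegativity then follows because $q(1-q)^K\ge 0$ for $q\in[0,1]$. As a sanity check, it also follows directly from $\min_{i\le K+1}\le\min_{i\le K}$ pathwise.

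The only step that needs care is the justification for splitting the integral, i.e.\ the finiteness established in the first step. Beyond that the argument is immediate.
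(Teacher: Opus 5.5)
Your proposal is correct and matches the paper's proof, which subtracts the coverage identity for $K+1$ from the identity for $K$ and factors the integrand as $(1-q)^K-(1-q)^{K+1}=q(1-q)^K$. Your finiteness check, which bounds both risks by $\E[J(\widehat Y_1,Y)]<\infty$, supplies the justification for the subtraction that the paper leaves implicit.
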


\begin{proof}
Subtract the identity for $K+1$ from the identity for $K$ and factor the integrand.
\end{proof}

\Interp Another candidate helps where acceptable mass is positive yet often missed. For a fixed $K$, the factor $q(1-q)^K$ is largest at $q=1/(K+1)$. Regions that are almost certain need little extra exploration. Regions with zero mass cannot be reached.

\begin{theorem}[Essential-infimum limit]\label{thm:support}
Define
\[
\widetilde d_\pi(c,y)
:=
\essinf_{\widehat y\sim \widetilde Q_{\pi,c,y}}J(\widehat y,y).
\]
Assume $\widetilde d_\pi$ is measurable. If $\E[J(\widehat Y_1,Y)]<\infty$, then
\[
\min_{1\le i\le K}J(\widehat Y_i,Y)
\longrightarrow
\widetilde d_\pi(c,Y)
\]
almost surely, and
\[
\boxed{
R_K(\pi)
\longrightarrow
\E_{c,Y}[\widetilde d_\pi(c,Y)]
}
\]
as $K\to\infty$.
\end{theorem}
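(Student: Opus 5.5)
The plan is to prove the almost-sure convergence first, conditionally on $(c,Y)$, and then pass to expectations by dominated convergence.

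Fix $(c,y)$ and write $M_K:=\min_{1\le i\le K}J(\widehat Y_i,y)$. This sequence is nonincreasing in $K$, so it has a pointwise limit $M_\infty\in[0,\infty)$.

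\emph{Lower bound.} By definition of the essential infimum, $\widetilde Q_{\pi,c,y}\bigl(J(\cdot,y)<\widetilde d_\pi(c,y)\bigr)=0$. The set $\{J<d\}$ is a countable union of the sets $\{J\le d-1/n\}$, each of which has mass zero. By countable additivity of the product law, $J(\widehat Y_i,y)\ge\widetilde d_\pi(c,y)$ for all $i$ almost surely. Hence $M_K\ge\widetilde d_\pi(c,y)$ for every $K$ almost surely.

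\emph{Upper bound.} Fix $\varepsilon>0$. Again by definition of the essential infimum,
\[
q_\varepsilon:=\widetilde Q_{\pi,c,y}\bigl(A_{\widetilde d_\pi(c,y)+\varepsilon}(y)\bigr)>0.
\]
Then $\Prb(M_K>\widetilde d_\pi+\varepsilon\mid c,y)=(1-q_\varepsilon)^K$, which is the computation from the proof of Theorem~\ref{thm:coverage}. This tends to $0$. Since $M_K$ is monotone, the event $\{M_\infty>\widetilde d_\pi+\varepsilon\}$ is contained in every $\{M_K>\widetilde d_\pi+\varepsilon\}$, so it has conditional probability zero. Taking $\varepsilon=1/n$ over countably many $n$ gives $M_\infty\le\widetilde d_\pi(c,y)$ almost surely. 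Borel--Cantelli is not needed here, because monotonicity converts convergence in probability directly into almost-sure convergence.

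\emph{Lifting to the joint law.} Integrating the conditional statement over $\nu(\dd c)P_c(\dd y)$ lifts it to almost-sure convergence under the joint law. This step uses the assumed measurability of $\widetilde d_\pi$ together with the kernel structure, so that the conditional null events assemble into a jointly measurable null set via Fubini.

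\emph{Convergence of the risk.} We have $0\le M_K\le M_1=J(\widehat Y_1,Y)$, and $M_1$ is integrable by hypothesis. Dominated convergence (or monotone convergence applied to the decreasing sequence $M_1-M_K$) then gives $R_K(\pi)\to\E[\widetilde d_\pi(c,Y)]$. An alternative route passes to the limit inside the identity of Theorem~\ref{thm:coverage}: $(1-\widetilde q_\pi)^K$ converges to $\one\{\widetilde q_\pi(c,Y,t)=0\}$, which equals $\one\{t<\widetilde d_\pi\}$ up to the single boundary point in $t$. I would use the random-variable route because it is cleaner.

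\emph{Main obstacle.} The hardest step is the measure-theoretic care in the upper bound and the lifting. One must check that positivity of $q_\varepsilon$ follows from the definition of the essential infimum even when $\widetilde d_\pi$ is attained only at the boundary. One must also check that the conditional almost-sure statements combine into a joint one. Both are handled by the countable reduction over $\varepsilon=1/n$ and by Fubini on the product kernel $\nu\otimes P\otimes\widetilde Q_\pi^{\otimes\mathbb N}$.
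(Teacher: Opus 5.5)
Your proposal is correct and follows essentially the same route as the paper: an almost-sure lower bound from the essential infimum with a countable union over candidates, an upper bound from the positive mass of the near-optimal sublevel sets at $\varepsilon=1/n$ (your $(1-q_\varepsilon)^K\to0$ plus monotonicity is the paper's ``never visits the set'' step), integration of the conditional statement over $(c,Y)$, and dominated convergence with $J(\widehat Y_1,Y)$ as the dominating variable. The only difference is cosmetic: you use the closed neighbourhood $A_{\widetilde d_\pi+\varepsilon}(y)$ where the paper uses the strict set $\{J<\widetilde d_\pi+1/n\}$, and both have positive mass.
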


\begin{proof}
For fixed $c$ and $y$, each sampled loss is at least $\widetilde d_\pi(c,y)$ outside a $\widetilde Q_{\pi,c,y}$-null set. A countable union over the sampled sequence is still null, so the sequence of minima is almost surely nonincreasing and bounded below by $\widetilde d_\pi(c,y)$. For every $n\ge1$, the set
\[
\left\{\widehat y\mid J(\widehat y,y)<\widetilde d_\pi(c,y)+\frac1n\right\}
\]
has positive $\widetilde Q_{\pi,c,y}$ mass by the definition of essential infimum. The probability that the infinite i.i.d. sequence never visits this set is zero. Intersecting these probability-one events over $n\in\mathbb N$ shows that the decreasing sequence of minima converges to $\widetilde d_\pi(c,y)$ almost surely. Integrating the conditional probability-one statement over $(c,Y)$ extends this to joint almost-sure convergence. Every minimum is bounded above by $J(\widehat Y_1,Y)$, so dominated convergence yields the risk limit.
\end{proof}

\Interp Infinite hard exploration retains only the best loss attainable under the training candidate law. If $\mathcal Y$ is a Polish space\footnote{Not invaded.}, $Q_{\pi,c}$ is a Borel probability law, exploration is deployment-consistent, and $J(\widehat y,y)$ is continuous in $\widehat y$, then $\widetilde d_\pi(c,y)=\inf_{\widehat y\in\supp Q_{\pi,c}}J(\widehat y,y)$. If $J(\widehat y,y)=d(\widehat y,y)$ for the metric $d$, this is the distance from $y$ to deployed support. Under those conditions the theorem recovers the support-coverage limit identified by Gladstone, Ji, and Du \citep{gladstone2026explorative}. Yay.

In Sections~\ref{sec:finite}--\ref{sec:nonuniform}, I assume deployment consistency. I therefore write $Q_{\pi,c}$ for both the training and deployed law.

\section{Finite embodied outputs}\label{sec:finite}

The general identity is ``embodied'' and interpreted as a count of completions (degrees of freedom), when the embodied vocabulary distinguishes finitely many outputs.

\begin{definition}[Identification loss]
Let $V(c):=\{y\in Y\mid P_c(Y=y)>0\}$ be the target support at context $c$. I write
\[
p_c(y):=P_c(Y=y),
\qquad
q_c(y):=Q_{\pi,c}(\widehat Y=y).
\]
The identification loss is
\[
J(\widehat y,y):=\one[\widehat y\neq y].
\]
\end{definition}

\begin{definition}[Soundness, completeness, and correctness]
At context $c$, a generator is sound when $\supp Q_{\pi,c}\subseteq V(c)$, complete when $V(c)\subseteq\supp Q_{\pi,c}$, and correct when $\supp Q_{\pi,c}=V(c)$.
\end{definition}

\Interp Soundness limits what a generator may produce, since every output it can emit must be one that actually occurs at $c$. On the other hand completeness limits what it may omit, since every output that occurs must stay within reach. A generator that always returns the same valid not-hotdog is sound and incomplete, because everything it produces is correct and almost everything correct is missing. A generator that reaches every valid not-hotdog and sometimes emits noise is complete and unsound. Correctness holds when what a generator can produce is exactly what occurs. Soundness on its own is easy, and the collapsed generator has it. What separates that generator from a correct one is how much of the valid set stays available, which is what freedom counts.

\begin{theorem}[Exact finite risk]\label{thm:finite}
For identification loss,
\[
\boxed{
R_K(\pi)
=
\E_{c\sim\nu}
\left[
\sum_{y\in V(c)}p_c(y)\left(1-q_c(y)\right)^K
\right]
}.
\]
The hit probability is
\[
H_K(\pi):=1-R_K(\pi)
=
\E_{c\sim\nu}
\left[
\sum_{y\in V(c)}p_c(y)
\left(1-\left(1-q_c(y)\right)^K\right)
\right].
\]
\end{theorem}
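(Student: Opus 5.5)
The plan is to specialise the coverage identity of Theorem~\ref{thm:coverage} to identification loss; alternatively, one can compute the expectation of the minimum directly. Under deployment consistency, the training law at $(c,y)$ is $Q_{\pi,c}$, which has mass function $q_c$ on the finite set $Y$. The loss $J(\widehat y,y)=\one[\widehat y\neq y]$ takes values in $\{0,1\}$, so it is bounded, and $\E[J(\widehat Y_1,Y)]\le 1<\infty$ holds. This means Theorem~\ref{thm:coverage} applies without further checks.

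The first step is to identify the acceptable neighbourhoods. For $0\le t<1$ we have $A_t(y)=\{y\}$, so $\widetilde q_\pi(c,y,t)=q_c(y)$. For $t\ge 1$ we have $A_t(y)=Y$, so the acceptable mass is $1$ and the miss probability $(1-1)^K$ vanishes. The integral over $[0,\infty)$ therefore reduces to an integral over $[0,1)$. On that interval the integrand does not depend on $t$, so the risk becomes $\E_{c,Y}[(1-q_c(Y))^K]$. An even more direct route gives the same expression: $\min_i J(\widehat Y_i,Y)$ is the indicator that no candidate equals $Y$. Conditional independence of the candidates gives the conditional expectation $(1-q_c(Y))^K$, and no tail integral is needed.

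The second step is to evaluate the outer expectation over $Y\sim P_c$. Because $Y$ is finite, this is the sum $\sum_{y\in Y}p_c(y)(1-q_c(y))^K$. Terms with $p_c(y)=0$ vanish, which restricts the sum to $V(c)$. Taking the expectation over $c\sim\nu$ then gives the boxed formula. For measurability of the integrand in $c$, I will appeal to the kernel assumptions on $P$ and $Q_\pi$, since each of $p_c(y)$ and $q_c(y)$ is a measurable function of $c$.

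The hit probability follows from $\sum_{y\in V(c)}p_c(y)=1$. Writing $1=\E_c\bigl[\sum_{y\in V(c)}p_c(y)\bigr]$ and subtracting the risk termwise gives $H_K(\pi)$ in the stated form. I do not expect a genuine obstacle. The only point needing care is the role of deployment consistency, which is what replaces the target-dependent $\widetilde Q_{\pi,c,y}(\{y\})$ by $q_c(y)$. I will state that substitution explicitly, and remark that without it the same formula holds with $q_c(y)$ replaced by $\widetilde Q_{\pi,c,y}(\{y\})$.
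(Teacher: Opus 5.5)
Your proposal is correct, and your ``even more direct route'' is exactly the paper's proof: the minimum of the indicator losses is the indicator that all $K$ candidates miss $y$, independence gives $(1-q_c(y))^K$, and weighting by $p_c(y)$ and averaging over $c\sim\nu$ gives the risk, with the hit probability as its complement. Your primary route through Theorem~\ref{thm:coverage} (using $A_t(y)=\{y\}$ for $t<1$ and $A_t(y)=Y$ for $t\ge1$) is valid but an unnecessary detour that reduces to the same computation, and your explicit note that deployment consistency licenses writing $q_c(y)$ for $\widetilde Q_{\pi,c,y}(\{y\})$ spells out what the paper covers by its standing assumption.
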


\begin{proof}
A target $y$ is missed exactly when all $K$ candidates differ from $y$. One candidate differs from $y$ with probability $1-q_c(y)$, and the candidates are independent, so all $K$ differ with probability $(1-q_c(y))^K$. Weighting that by $p_c(y)$ and taking the expectation over $c\sim\nu$ produces the displayed
risk. The hit probability is its complement.
\end{proof}

\Interp Three quantities determine the risk. First the target frequency $p_c(y)$ is how often each valid output is asked for, second the generator mass $q_c(y)$ is how often
the model produces it, and third $K$ is how many attempts each comparison allows. As $K$ grows, $(1-q_c(y))^K$ falls to zero for every output with positive mass, so in the limit only the support remains. At finite $K$, the size of that mass also affects the risk. A model can permit an output and still produce it so rarely
that $(1-q_c(y))^K$ stays near one for a long time. A count over the support therefore cannot rank finite-$K$ performance on its own.

\begin{corollary}[Expected distinct coverage]\label{cor:distinct}
Fix one context with $m:=|V(c)|$ and a uniform target distribution on $V(c)$. Let $N_K$ be the number of distinct valid output cells observed among the $K$ candidate samples. Then
\[
\boxed{
H_K(\pi\mid c)
=
\frac{\E[N_K]}{m}
}.
\]
\end{corollary}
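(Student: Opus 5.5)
The plan is to start from the per-context hit probability in Theorem~\ref{thm:finite}, rewrite it with indicator variables, and then apply linearity of expectation. Fix the context $c$. With $m=|V(c)|$ and the target uniform on $V(c)$, we have $p_c(y)=1/m$ for every $y\in V(c)$. Reading Theorem~\ref{thm:finite} at this single context (that is, conditioning on $c$, or taking $\nu$ to be the point mass at $c$) gives
\[
H_K(\pi\mid c)=\frac1m\sum_{y\in V(c)}\left(1-\left(1-q_c(y)\right)^K\right).
\]
This reduction is immediate.

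Next I express $N_K$ as a sum of indicators. Since $N_K$ counts the distinct valid cells seen among the candidates,
\[
N_K=\sum_{y\in V(c)}\one\left[\exists\, i\le K:\ \widehat Y_i=y\right].
\]
Here I need the training candidates to be drawn i.i.d.\ from $Q_{\pi,c}$ independently of the target, which deployment consistency (assumed from Section~\ref{sec:finite} on) provides. Under that assumption, the probability that cell $y$ is never hit is $(1-q_c(y))^K$, exactly as in the proof of Theorem~\ref{thm:finite}. Linearity of expectation then gives
\[
\E[N_K]=\sum_{y\in V(c)}\left(1-\left(1-q_c(y)\right)^K\right).
\]
Dividing by $m$ and comparing with the previous display yields the boxed identity.

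The main point of care is not the computation but the interpretation. First, cells outside $V(c)$ that the candidates hit must not be counted in $N_K$. This is consistent with the sum in Theorem~\ref{thm:finite}, which runs only over $V(c)$. Second, the candidate law must not depend on the target, because otherwise $N_K$ is not a well-defined statistic independent of $Y$. Both issues are settled by the standing deployment-consistency assumption and the definition of $N_K$ as counting valid cells only. No further obstacle is expected.
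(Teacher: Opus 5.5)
Your proposal is correct and follows the paper's proof essentially verbatim: you write $\E[N_K]$ as the sum over $y\in V(c)$ of the probability $1-(1-q_c(y))^K$ that cell $y$ appears among the $K$ i.i.d.\ candidates, divide by $m$, and match the result with Theorem~\ref{thm:finite} at $p_c(y)=1/m$. Your remarks that only valid cells are counted and that the standing deployment-consistency assumption is used make explicit what the paper leaves implicit.
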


\begin{proof}
Each valid cell $y$ appears among the candidates with probability $1-(1-q_c(y))^K$. Therefore
\[
\E[N_K]
=
\sum_{y\in V(c)}
\left(1-(1-q_c(y))^K\right).
\]
Divide by $m$ and apply Theorem~\ref{thm:finite}.
\end{proof}

\Interp Best-of-$K$ success equals normalised distinct-output coverage. The hit probability is the expected fraction of valid output cells reached by the samples. The permission language then maps the deployed support count to the local freedom factor $2^a-1$. Latent draws count only after the decoder converts them into output distinctions.

\begin{proposition}[Output counting and recoding]\label{prop:recoding}
Let $Z$ be a latent space and $g:Z\to Y$ a decoder. For latent candidates $z_1,\ldots,z_K$,
\[
|\{g(z_1),\ldots,g(z_K)\}|
\le
|\{z_1,\ldots,z_K\}|
\le K.
\]
Every latent or parameter recoding that leaves each conditional output law $Q_c$ unchanged also leaves $F_Q$, $\pi_{F_Q}$, and $w(\pi_{F_Q})$ unchanged.
\end{proposition}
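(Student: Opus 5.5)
The proof has two independent parts, and I will handle them separately.

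For the counting inequality, I will use only the fact that $g$ is a function. The image of a finite set under any map has cardinality at most that of the set. Applying this to the set $\{z_1,\ldots,z_K\}$ gives $|\{g(z_1),\ldots,g(z_K)\}|\le|\{z_1,\ldots,z_K\}|$. The second inequality holds because a list of $K$ entries, with repeats allowed, has at most $K$ distinct values. I will also note that equality in the first step can fail: distinct latents may decode to the same output, so latent diversity only bounds output coverage from above. This links back to Corollary~\ref{cor:distinct}, because $N_K$ counts output cells rather than latent draws.

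For the invariance claim, I will trace the chain of definitions and check that every quantity depends on the generator only through the family of conditional output laws $(Q_c)_{c\in C}$. A recoding here means any reparametrisation of latents or parameters. Latent and parameter choices affect $Q_c$ only through the pushforward under the decoder, that is $Q_c=g_\#(\text{latent law at }c)$. By hypothesis these pushforwards are unchanged. The chain then runs in three steps:
\begin{enumerate}[leftmargin=*,label=(\roman*)]
\item $F_Q(c)=\supp Q_c$ is a function of $Q_c$ alone, so $F_Q$ is unchanged.
\item $\pi_{F_Q}=\{e_{c,y}\mid y\notin F_Q(c)\}$ is a function of $F_Q$ alone, so the policy statement is unchanged.
\item By Proposition~\ref{prop:freedom}, $w(\pi_{F_Q})=\prod_c(2^{|F_Q(c)|}-1)$ depends only on the support sizes, so freedom is unchanged.
\end{enumerate}
The vocabulary $\mathfrak v$ and the language $L_{\mathfrak v}$ are built from $C$ and $Y$ alone, so a recoding cannot alter them either.

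There is no real obstacle in this argument. The only point needing care is stating precisely what counts as a recoding: a change of latent space, decoder, or parameters is admissible exactly when the induced output law at each context is preserved. Once that is fixed, invariance is immediate, because the chain $Q\mapsto F_Q\mapsto\pi_{F_Q}\mapsto w$ is a composition of well-defined functions.
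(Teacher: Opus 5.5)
Your proof is correct and follows the paper's argument: the first inequality is the image-of-a-finite-set bound, and invariance holds because $F_Q$, $\pi_{F_Q}$, and $w(\pi_{F_Q})$ are computed from the conditional output laws alone, via the support map, the permission-profile bijection, and Proposition~\ref{prop:freedom}. You also spell out two points the paper leaves implicit: the bound $|\{z_1,\ldots,z_K\}|\le K$, and the fact that $\mathfrak v$ and $L_{\mathfrak v}$ depend only on $C$ and $Y$.
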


\begin{proof}
The image of a finite set under a function cannot contain more elements than the set itself. Regarding the second claim, $F_Q(c)$ is the support of $Q_c$, so any recoding that leaves every $Q_c$ unchanged leaves every $F_Q(c)$ unchanged. The permission-profile bijection then fixes $\pi_{F_Q}$, and Proposition~\ref{prop:freedom} computes $w(\pi_{F_Q})$ from the support cardinalities by themselves. None of the three quantities refer to a latent or a parameter, so none of them can move when the output law is held fixed.
\end{proof}

\Interp Several latent values can decode to the same output. Counting latents can therefore overstate the distinctions the model can express at the chosen output layer. Freedom follows function rather than parameter form. Hence the third axis claim. 

\section{When exploration ranks freedom}

I first hold target frequencies exchangeable. This isolates the relation between sampled coverage and the support of the policy.

\begin{theorem}[Uniform targets]\label{thm:uniform}
Fix one context with $m$ valid outputs. Let the target distribution be uniform on $V$ and let $q$ range over all probability distributions on $Y$.
\begin{enumerate}[leftmargin=*,label=\arabic*.]
    \item For $K=1$, every distribution supported on $V$ minimises $R_1$.
    \item For every $K\ge 2$, the unique minimiser is the uniform distribution on $V$.
\end{enumerate}
Consequently, for $K\ge 2$, the optimum has support $V$ and maximises the local freedom factor $2^{|\supp q|}-1$ among sound policies.
\end{theorem}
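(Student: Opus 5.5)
By Theorem~\ref{thm:finite} with $p(y)=1/m$ on $V$, the single-context risk is
\[
R_K(q)=\frac1m\sum_{y\in V}\bigl(1-q(y)\bigr)^K .
\]
The plan is to minimise this over the probability simplex on $Y$. The argument has two steps: first push all mass into $V$, then use strict convexity of $x\mapsto(1-x)^K$ to show the mass must be spread evenly.

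For $K=1$ the risk is $R_1(q)=1-q(V)/m$. It depends on $q$ only through the total mass on $V$, and it is minimised exactly when $q(V)=1$, i.e.\ when $\supp q\subseteq V$. So every distribution supported on $V$ is a minimiser, and no other distribution is.

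For $K\ge2$ I proceed in two steps.

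\emph{Step one: remove mass outside $V$.} Suppose $q$ places mass $s>0$ outside $V$. Move that mass onto some $y\in V$. This strictly increases $q(y)$, and since $(1-x)^K$ is strictly decreasing on $[0,1]$, it strictly lowers $R_K$. Hence any minimiser has $\supp q\subseteq V$, and we may restrict to $q$ on $V$ with $\sum_{y\in V}q(y)=1$.

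\emph{Step two: Jensen.} Write $f(x)=(1-x)^K$, so $f''(x)=K(K-1)(1-x)^{K-2}$. This is $\ge0$ on $[0,1]$ and $>0$ on $[0,1)$; when $K=2$ it is the constant $2$. Strict Jensen then gives
\[
\frac1m\sum_{y\in V}f\bigl(q(y)\bigr)\ \ge\ f\!\left(\frac1m\right),
\]
with equality iff all $q(y)$ are equal, i.e.\ $q$ is uniform on $V$.

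The main obstacle is making Jensen strict when some $q(y)=1$, since $f''(1)=0$ for $K>2$. I would avoid relying on $f''$ and argue directly. The function $f$ is strictly convex on $[0,1]$ because its derivative $-K(1-x)^{K-1}$ is strictly increasing there. Strict convexity on the whole interval is enough for the equality case of Jensen. The degenerate case $m=1$ is also harmless: the only distribution on $V$ is the point mass, which is itself uniform. This gives uniqueness.

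For the consequence: the unique optimum has support exactly $V$, which is sound by the earlier definition. Among sound policies $|\supp q|\le|V|=m$, and $a\mapsto 2^a-1$ is increasing. So by Proposition~\ref{prop:freedom} the optimum maximises the local freedom factor $2^{|\supp q|}-1$ among sound policies.

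For contrast, at $K=1$ a collapsed sound policy (a point mass inside $V$) is also optimal. This is why the restriction $K\ge2$ is needed.
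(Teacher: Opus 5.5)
Your proof is correct and follows the paper's argument essentially step for step: move any mass outside $V$ into $V$, note that $R_1=1-q(V)/m$ is constant on sound laws, and apply Jensen with strict convexity of $x\mapsto(1-x)^K$ for $K\ge2$. Your version is slightly tighter than the paper's, which says only that relocation weakly reduces the risk; your observation that it strictly reduces $R_K$ is what explicitly rules out minimisers with mass outside $V$, and so makes uniqueness over all of $Y$ airtight.
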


\begin{proof}
Any mass outside $V$ can be moved to a valid cell and weakly reduce every term in the risk. Hence an optimum is supported on $V$. For $K=1$,
\[
R_1(q)
=
1-\frac{1}{m}\sum_{y\in V}q(y)
=
1-\frac{1}{m}
\]
for every sound $q$.

For $K\ge2$, the map $x\mapsto(1-x)^K$ is strictly convex on $[0,1]$. Jensen's inequality then produces the chain
\[
\frac1m\sum_{y\in V}(1-q(y))^K
\ge
\left(1-\frac1m\sum_{y\in V}q(y)\right)^K
=
\left(1-\frac1m\right)^K.
\]
Equality holds exactly when $q(y)=1/m$ for every $y\in V$.
\end{proof}

\Interp One sampled output cannot distinguish a collapsed generator from a covering one, so at $K=1$ freedom is not visible to training. However with second or later sample it is, and from that point on the unique population optimum is the weakest sound permission profile. Exploration is the enabler of that visibility, and $K=2$ is where we can start to see its effect\footnote{Thereby further validating years of my life spent in relative obscurity and poverty.}.

The next theorem holds support size fixed and isolates the effect of balanced mass.

\begin{theorem}[Balanced support]\label{thm:balanced}
Fix $m$ uniformly likely valid outputs. Let a sound generator place uniform mass on a permitted subset of size $a$, where $1\le a\le m$. Its hit probability is
\[
\boxed{
H_K^*(a)
=
\frac{a}{m}
\left[
1-\left(1-\frac1a\right)^K
\right]
}.
\]
For $K=1$, $H_1^*(a)=1/m$ for every $a$. For every $K\ge2$, $H_K^*(a)$ is strictly increasing in $a$ and therefore strictly increasing in the local freedom factor $2^a-1$.
\end{theorem}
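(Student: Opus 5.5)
The plan is to substitute the balanced generator into Theorem~\ref{thm:finite} and then reduce monotonicity in $a$ to an elementary inequality for a function of one real variable. The generator puts $q(y)=1/a$ on each of $a$ permitted cells of $V$, all lying in $V$ because it is sound, and $q(y)=0$ on the remaining $m-a$ cells. Each target has weight $p(y)=1/m$. In the hit formula every permitted cell contributes $\frac1m\bigl(1-(1-1/a)^K\bigr)$ and every unpermitted cell contributes $0$. Summing over the $a$ permitted cells gives the boxed expression. For $K=1$ the bracket equals $1/a$, so $H_1^*(a)=1/m$, which agrees with the first part of Theorem~\ref{thm:uniform}.

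For $K\ge2$ I will set $x=1/a\in(0,1]$ and write $m\,H_K^*(a)=f(x):=\bigl(1-(1-x)^K\bigr)/x$. Since $a\mapsto 1/a$ is strictly decreasing, it suffices to show that $f$ is strictly decreasing on $(0,1]$. One way is to expand $1-(1-x)^K=x\sum_{j=0}^{K-1}(1-x)^j$, which gives $f(x)=\sum_{j=0}^{K-1}(1-x)^j$. This is a sum of nonincreasing functions of $x$. The $j=1$ term is present because $K\ge2$, and it is strictly decreasing, so $f$ is strictly decreasing. Equivalently, $f$ is the chord slope of the strictly concave function $x\mapsto 1-(1-x)^K$ from the origin, and such slopes strictly decrease. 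I prefer the geometric-sum version because it is fully elementary and exposes exactly where $K\ge2$ is needed.

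The last step is to note that $a\mapsto 2^a-1$ is strictly increasing on the positive integers. A quantity strictly increasing in $a$ is therefore strictly increasing in the local freedom factor of Proposition~\ref{prop:freedom}. I do not expect a real obstacle. The only point needing care is strictness: the $j=1$ term $(1-x)$ has to be strictly decreasing on the whole interval, including near $x=1$ (that is, $a=1$). It is, because it is linear with slope $-1$.
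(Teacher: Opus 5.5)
Your proof is correct, but it handles monotonicity differently from the paper. The paper treats $a$ as a real variable, differentiates $h_K(a)=a[1-(1-1/a)^K]$, and shows that $h_K'(a)=1-(1-1/a)^{K-1}(1+(K-1)/a)$ is positive. To do so it substitutes $x=1/a$ and $n=K-1$, then checks that $f(x)=n\log(1-x)+\log(1+nx)$ vanishes at $0$ and has negative derivative on $(0,1)$. That argument covers only $a>1$, so the paper adds a direct comparison of $a=1$ with $a=2$.

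Your factorisation $mH_K^*(a)=\sum_{j=0}^{K-1}(1-1/a)^j$ avoids calculus. It covers every real $a\ge1$, including $a=1$, in one argument, and it shows exactly where $K\ge2$ enters, through the $j=1$ term. It also gives Theorem~\ref{thm:complement} immediately: passing from $K$ to $K+1$ appends exactly the term $(1-1/a)^K$, which is increasing in $a$. The paper's route additionally yields an explicit derivative in $a$, but the paper does not use it elsewhere.

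One cosmetic point: read the $j=0$ summand as the constant $1$, so the identity holds at $x=1$ without invoking the convention $0^0=1$.
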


\begin{proof}
A target outside the permitted subset is always missed. A target inside it is hit with probability $1-(1-1/a)^K$. Averaging over the $m$ uniform targets resulting in the formula.

For $K\ge2$, first compare $a=1$ and $a=2$. We have $H_K^*(1)=1/m$ and
\[
H_K^*(2)=\frac{2}{m}\left(1-2^{-K}\right)>\frac1m.
\]
For $a>1$, extend $a$ to a continuous variable and define
\[
h_K(a):=a\left[1-\left(1-\frac1a\right)^K\right].
\]
Its derivative is
\[
h_K'(a)
=
1-
\left(1-\frac1a\right)^{K-1}
\left(1+\frac{K-1}{a}\right).
\]
Let $x=1/a\in(0,1)$ and $n=K-1$. The logarithm of the product subtracted above is
\[
f(x):=n\log(1-x)+\log(1+nx).
\]
For $x\in(0,1)$,
\[
f'(x)
=
-\frac{n(n+1)x}{(1-x)(1+nx)}<0,
\]
while $f(0)=0$. Hence the product is strictly below one and $h_K'(a)>0$ for $a>1$. Together with the direct $a=1$ comparison, this proves strict increase on the integers.
\end{proof}

\Interp Under the symmetry assumptions, the objective's preference order over balanced sound policies is exactly the freedom order. At every $K$ above one, training pays weaker policies strictly more, and Figure~\ref{fig:balanced} is the payment schedule.

\begin{theorem}[Complementarity]\label{thm:complement}
Under the assumptions of Theorem~\ref{thm:balanced}, the gain from one additional candidate is
\[
\boxed{
H_{K+1}^*(a)-H_K^*(a)
=
\frac1m\left(1-\frac1a\right)^K
}.
\]
For every fixed $K\ge1$, this gain is strictly increasing in $a$.
\end{theorem}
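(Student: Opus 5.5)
The plan is to compute the difference directly from the closed form in Theorem~\ref{thm:balanced}, then prove monotonicity in $a$ from the sign of a single factor.

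\textbf{Step 1: the gain formula.} Write $H_K^*(a)=\frac{a}{m}\bigl[1-(1-\tfrac1a)^K\bigr]$. Subtracting the $K$ value from the $K+1$ value cancels the constant term. What remains is
\[
\frac{a}{m}\left[\left(1-\frac1a\right)^K-\left(1-\frac1a\right)^{K+1}\right]
=\frac{a}{m}\left(1-\frac1a\right)^K\cdot\frac1a
=\frac1m\left(1-\frac1a\right)^K,
\]
which is the boxed identity. As a cross-check, this agrees with Corollary~\ref{cor:marginal}. In the balanced case, each of the $a$ permitted targets has weight $1/m$ and mass $q=1/a$, so the contribution is $a\cdot\frac1m\cdot\frac1a(1-\frac1a)^K$.

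\textbf{Step 2: monotonicity.} Fix $K\ge1$. The factor $1/m$ does not depend on $a$, so it suffices to show that $a\mapsto(1-1/a)^K$ is strictly increasing on the integers $1\le a\le m$.

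\begin{itemize}
\item At $a=1$, the base is $0$, so the gain is $0$.
\item For $a\ge2$, the base $1-1/a$ lies in $(0,1)$ and is strictly increasing in $a$, since $1/a$ is strictly decreasing.
\item The map $x\mapsto x^K$ is strictly increasing on $[0,1]$ for $K\ge1$.
\end{itemize}

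Composing these gives strict increase, including the step from $a=1$ (gain $0$) to $a=2$ (gain $2^{-K}/m>0$).

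There is no real obstacle here. The only point needing care is the boundary case $a=1$, where the base vanishes. This is exactly why the theorem requires $K\ge1$: at $K=0$ the expression $0^0$ would give the constant gain $1/m$ for every $a$, and monotonicity would fail. Finally, since $2^a-1$ is strictly increasing in $a$, I will note that the gain is also strictly increasing in the local freedom factor. This matches the interpretation that freedom raises the marginal return to exploration.
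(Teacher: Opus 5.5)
Your proposal is correct and follows the paper's proof: subtract the two closed forms from Theorem~\ref{thm:balanced} to get $(1/m)(1-1/a)^K$, then observe that this rises strictly with $a$. Your separate treatment of the $a=1$ boundary and your cross-check against Corollary~\ref{cor:marginal} fill in details the paper leaves implicit.
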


\begin{proof}
Subtract the two expressions from Theorem~\ref{thm:balanced}, and the remaining term is $(1/m)(1-1/a)^K$, which rises strictly with $a$.
\end{proof}

\Interp Freedom and exploration are complements under the theorem's symmetry assumptions. A policy that permits one output gains nothing from repeated sampling. A policy with many balanced permitted outputs continues to reveal new distinctions. For fixed target-support size, local freedom increases the marginal value of exploration. Exploration then increases the degree to which training distinguishes local freedom.

\begin{proposition}[Entropy balances access]\label{prop:entropy}
Fix a sound support $A\subseteq V$ of size $a$ under uniform targets. For every $K\ge2$, the uniform law on $A$ uniquely maximises the best-of-$K$ hit probability among laws supported on $A$. It also uniquely maximises Shannon entropy on $A$. If the support may vary among sound laws, the uniform law on all of $V$ uniquely maximises both hit probability and entropy, and its permission profile has maximal local freedom among sound profiles.
\end{proposition}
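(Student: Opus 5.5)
The plan has three parts: the fixed-support hit claim, the entropy claim, and the variable-support claim, each reduced to strict convexity or an earlier theorem.

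\textbf{Fixed support, hit probability.} Fix a sound support $A\subseteq V$ with $|A|=a$, and consider laws $q$ with $\supp q\subseteq A$. By Theorem~\ref{thm:finite}, with uniform targets, the hit probability is
\[
H_K(q)=1-\frac1m\sum_{y\in V}(1-q(y))^K .
\]
Targets in $V\setminus A$ contribute the constant $1$ to the sum, so maximising $H_K$ is equivalent to minimising $\sum_{y\in A}(1-q(y))^K$ subject to $\sum_{y\in A}q(y)=1$. For $K\ge2$ the map $x\mapsto(1-x)^K$ is strictly convex on $[0,1]$. Jensen's inequality, exactly as in the proof of Theorem~\ref{thm:uniform}, then gives
\[
\sum_{y\in A}(1-q(y))^K\ge a\left(1-\frac1a\right)^K ,
\]
with equality if and only if $q(y)=1/a$ on $A$. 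This proves uniqueness of the uniform maximiser. I read ``laws supported on $A$'' as $\supp q\subseteq A$; the argument covers that reading and so also the stricter reading $\supp q=A$. When $a=1$ the feasible set is a single point, so uniqueness is trivial.

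\textbf{Fixed support, entropy.} The uniform law on $A$ uniquely maximises Shannon entropy on $A$. This is the standard Gibbs inequality $H(q)\le\log a$, with equality iff $q$ is uniform; it also follows from strict concavity of $-x\log x$ and Jensen.

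\textbf{Variable support.} Now let the support range over all sound laws, i.e.\ laws with $\supp q\subseteq V$. Theorem~\ref{thm:uniform}(2) already states that for $K\ge2$ the uniform law on $V$ is the unique minimiser of $R_K$, hence the unique maximiser of $H_K$, over all laws on $Y$, and in particular over sound ones. For entropy, any sound law has $H(q)\le\log|\supp q|\le\log m$. Equality requires both $|\supp q|=m$ and uniformity on $V$, which gives uniqueness. Finally, the uniform law on $V$ has permission profile $F(c)=V$ at this context. By Proposition~\ref{prop:freedom}, its local factor $2^{m}-1$ is the largest possible among sound profiles, since $a\mapsto 2^a-1$ is strictly increasing and sound profiles have $a\le m$. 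The freedom claim is only this monotonicity and needs no probabilistic argument.

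The only delicate point is making sure off-support targets in $V\setminus A$ enter as constants, so that the fixed-support optimisation decouples cleanly. Everything else follows by invoking Jensen and Theorem~\ref{thm:uniform}.
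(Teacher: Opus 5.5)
Your proposal is correct and follows the paper's proof: Jensen's inequality on the strictly convex $(1-x)^K$ (equivalently, the paper's strictly concave $1-(1-x)^K$) gives the fixed-support claim, the standard entropy bound gives the entropy claim, Theorem~\ref{thm:uniform} gives the variable-support hit claim, and Proposition~\ref{prop:freedom} with monotonicity of $2^a-1$ gives the freedom claim. You also state two steps the paper leaves implicit: that targets in $V\setminus A$ contribute only constants, and the variable-support entropy bound $H(q)\le\log|\supp q|\le\log m$.
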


\begin{proof}
On a fixed support $A$, the hit probability is
\[
\frac1m\sum_{y\in A}\left[1-(1-q(y))^K\right].
\]
The bracketed function is strictly concave for $K\ge2$, so Jensen's inequality gives a unique maximum at $q(y)=1/a$. Shannon entropy has the same unique maximiser on a fixed finite support. Theorem~\ref{thm:uniform} establishes the full-support claim, and Proposition~\ref{prop:freedom} maximal local freedom.
\end{proof}

\Interp Entropy spreads mass across a permitted output set, while freedom is the count of those completions or refinements of the corresponding policy statement still left to pursue. Degrees of freedom remaining, so to speak. Freedom is how loose your straight jacket feels. Entropy and freedom select the same law in this particular symmetric finite setting, but they remain distinctly different quantities. In a structured embodied language, supports with equal entropy or equal cardinality are very likely to have different extension structure.

\begin{figure}[t]
\centering
\includegraphics[width=0.78\linewidth]{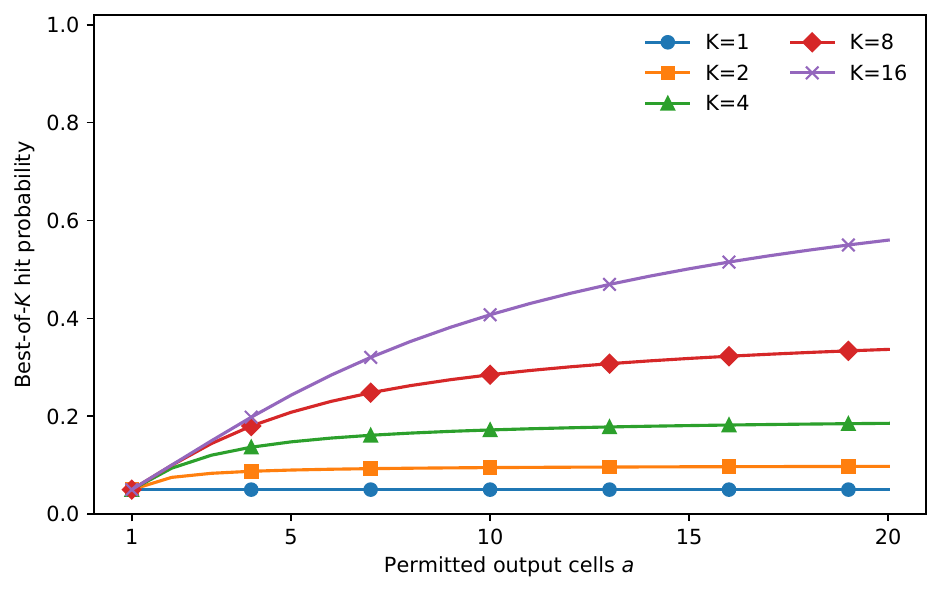}
\caption{Best-of-$K$ hit probability for a uniform target over $m=20$ valid output cells when the generator is uniform on $a$ permitted cells. For $K=1$ the curve is flat. For every $K>1$ sampled coverage rises with the local freedom factor $2^a-1$.}
\label{fig:balanced}
\end{figure}

\begin{example}[Freedom alone does not rank finite-$K$ access]\label{ex:skew}
Let $m=3$ and $K=2$. A generator with
\[
q^{(1)}=(1/2,1/2,0)
\]
has support size two and hit probability
\[
H_2(q^{(1)})=\frac12.
\]
A generator with
\[
q^{(2)}=(0.998,0.001,0.001)
\]
has support size three and hit probability
\[
H_2(q^{(2)})\approx0.3347.
\]
The second policy has greater freedom in the permission language, yet lower finite-$K$ access.
\end{example}

\Interp Probability mass is the intermediary between permission and access. A generator may hold a commodious permitted set and yet visit scarcely any of it, should almost the whole of its mass repose upon a single output. The second
generator above is the weaker of the two in the permission language, its
extension factor standing at seven against three. It is the poorer
performer at $K=2$ notwithstanding. Finite exploration adjudicates what a
generator reaches rather than what it is at liberty to produce. Freedom
resumes its authority only as $K$ grows without bound, when even the most meagre allocation is visited in the end.

\section{Nonuniform targets}\label{sec:nonuniform}

The next result shows how finite exploration balances target frequency against support coverage.

\begin{theorem}[Optimal mass under nonuniform targets]\label{thm:kkt}
Fix one context with $m\ge2$ valid outputs and strictly positive target probabilities $p_1,\ldots,p_m$. For identification loss, let
\[
R_K(q)=\sum_{j=1}^m p_j(1-q_j)^K
\]
over the probability simplex.
\begin{enumerate}[leftmargin=*,label=\arabic*.]
    \item For $K=1$, every optimum is supported on the outputs with maximal target probability.
    \item For $K>1$, the optimum is unique and has coordinates
    \[
    \boxed{
    q_j^*
    =
    \left[
    1-
    \left(\frac{\lambda}{Kp_j}\right)^{1/(K-1)}
    \right]_+
    }
    \]
    where $[x]_+:=\max(x,0)$ and $\lambda$ is chosen so that $\sum_jq_j^*=1$.
\end{enumerate}
\end{theorem}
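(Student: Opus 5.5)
For $K=1$ the risk is linear in $q$. We have $R_1(q)=\sum_j p_j(1-q_j)=1-\sum_j p_jq_j$. Minimising over the simplex is therefore the same as maximising the linear functional $\sum_j p_jq_j$. Since $\sum_j p_jq_j\le \max_j p_j$, with equality exactly when every $q_j$ with $p_j<\max_i p_i$ vanishes, every optimum is supported on the outputs of maximal target probability. The domain is the simplex on the $m$ valid outputs. Mass off $V$ can be moved inside and only lowers risk, as in the proof of Theorem~\ref{thm:uniform}, so this restriction costs nothing.

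For $K>1$ I would argue by convexity and then read off the KKT conditions. Each term $p_j(1-q_j)^K$ is strictly convex on $[0,1]$ because $p_j>0$ and $K\ge2$. The sum is therefore strictly convex on the compact convex simplex. This gives existence of a minimiser and uniqueness at once. The constraints are linear, so Slater's condition holds and the KKT conditions are necessary and sufficient. Introduce a multiplier $\lambda$ for $\sum_jq_j=1$ and multipliers $\mu_j\ge0$ for $q_j\ge0$. Stationarity reads
\[
-Kp_j(1-q_j)^{K-1}+\lambda-\mu_j=0,\qquad \mu_jq_j=0 .
\]
If $q_j>0$, then $\mu_j=0$ and $(1-q_j)^{K-1}=\lambda/(Kp_j)$, which gives $q_j=1-(\lambda/(Kp_j))^{1/(K-1)}$. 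If $q_j=0$, then $\lambda=Kp_j+\mu_j\ge Kp_j$, so the bracket $1-(\lambda/(Kp_j))^{1/(K-1)}$ is nonpositive. In both cases $q_j^*$ equals the clipped expression $[\,\cdot\,]_+$. Conversely, any point of this form that sums to one satisfies KKT, with $\mu_j:=\lambda-Kp_j\ge0$ on the zero coordinates.

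The step needing care is showing that a suitable $\lambda$ exists and is unique. It must also be nonnegative, so that the fractional power is real. Define $S(\lambda):=\sum_j[1-(\lambda/(Kp_j))^{1/(K-1)}]_+$ on $\lambda\in[0,\infty)$. This function is continuous and nonincreasing. It is strictly decreasing wherever it is positive. We have $S(0)=m\ge2>1$, and $S(\lambda)=0$ for $\lambda\ge K\max_jp_j$. The intermediate value theorem therefore yields a unique $\lambda\in(0,K\max_jp_j)$ with $S(\lambda)=1$. As a consistency check, $\lambda>0$ follows directly from the KKT conditions: some $q_j<1$ because $m\ge2$, so $\lambda\ge Kp_j(1-q_j)^{K-1}>0$. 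Uniqueness of $\lambda$ also follows independently from the uniqueness of $q^*$.

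Beyond these points, the argument only requires checking that the clipping is consistent with complementary slackness, which the case analysis above already covers.
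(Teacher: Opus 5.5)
Your proposal is correct and takes the same route as the paper. For $K=1$ you use the linear form $R_1(q)=1-\sum_j p_jq_j$, and for $K>1$ you combine strict convexity with the stationarity condition $-Kp_j(1-q_j)^{K-1}+\lambda-\mu_j=0$ and complementary slackness; beyond the paper's terse argument, you also show that the KKT conditions are sufficient, that $\lambda>0$, and that a unique $\lambda$ with $\sum_j q_j^*=1$ exists via the monotone function $S(\lambda)$.
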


\begin{proof}
For $K=1$,
\[
R_1(q)=1-\sum_{j=1}^mp_jq_j,
\]
so the optimum concentrates on the largest $p_j$ values.

For $K>1$, $R_K$ is strictly convex. Introduce a multiplier $\lambda$ for $\sum_jq_j=1$ and multipliers $\mu_j\ge0$ for $q_j\ge0$. The stationarity\footnote{Is this a word? Well, it is now.} condition is
\[
-Kp_j(1-q_j)^{K-1}+\lambda-\mu_j=0.
\]
For active coordinates $q_j>0$, complementary slackness results in $\mu_j=0$ and the displayed formula. For inactive coordinates, $q_j=0$ when $\lambda\ge Kp_j$. Strict convexity rules out any second minimiser.
\end{proof}

\Interp At finite $K$ the optimum is parsimonious. It bestows mass only upon targets whose frequency commends them, since $q_j^*$ is positive exactly when $Kp_j$ exceeds $\lambda$, and among those so admitted the more frequent are the
more generously endowed. Exploration erodes this parsimony. Further mass upon a target the candidates reach many times over purchases little, whereas the first mass conferred upon a neglected target purchases a great deal, and the disparity widens as the candidates multiply. The admitting threshold
$\lambda/K$ falls in consequence, and at length lies beneath every positive
target frequency. The optima then approach uniform mass across the whole of the target support, and an optimum once governed by frequency comes to be governed by coverage.

\begin{corollary}[Full-support limit]\label{cor:uniformlimit}
Under the assumptions of Theorem~\ref{thm:kkt}, every output is active for all sufficiently large $K$, and
\[
q_j^*\longrightarrow\frac1m
\]
for every $j$ as $K\to\infty$.
\end{corollary}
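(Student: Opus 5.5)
The plan is to work directly from the closed form in Theorem~\ref{thm:kkt}. For $K>1$ the optimum is $q_j^*=[1-r_j]_+$, where
\[
r_j:=\left(\frac{\lambda_K}{Kp_j}\right)^{1/(K-1)}
\]
and $\lambda_K$ is fixed by normalisation. On the active set $S_K=\{j: Kp_j>\lambda_K\}$, normalisation reads
\[
\sum_{j\in S_K}\bigl(1-r_j\bigr)=1,
\qquad\text{that is,}\qquad
\sum_{j\in S_K} r_j=|S_K|-1.
\]
Write $s_K:=(\lambda_K/K)^{1/(K-1)}$, so that $r_j=s_K\,p_j^{-1/(K-1)}$. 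Since $p_j^{-1/(K-1)}\to1$ as $K\to\infty$ for every $j$, all the quantities $r_j$ on any active set have asymptotically equal ratios.

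First I show that every output is active for large $K$. Suppose $j\notin S_K$. Then $Kp_j\le\lambda_K$, so $s_K\ge p_j^{1/(K-1)}$. For every active $i$ this gives
\[
r_i=s_K\,p_i^{-1/(K-1)}\ge (p_j/p_i)^{1/(K-1)}\ge (p_{\min}/p_{\max})^{1/(K-1)}=:\rho_K\to1.
\]
Combining this with $\sum_{i\in S_K}r_i=|S_K|-1$ gives $|S_K|\rho_K\le|S_K|-1$. Hence $|S_K|\ge 1/(1-\rho_K)\to\infty$. Because $|S_K|\le m$, this is impossible for large $K$. So there is a threshold $K_0$ beyond which $S_K=\{1,\dots,m\}$. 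The set $S_K$ is nonempty by normalisation, so no separate argument is needed for that.

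Next I establish the limit. For $K\ge K_0$ every output is active, so
\[
s_K\sum_{j=1}^m p_j^{-1/(K-1)}=m-1,
\qquad\text{hence}\qquad
s_K=\frac{m-1}{\sum_j p_j^{-1/(K-1)}}\to\frac{m-1}{m}.
\]
Then $r_j=s_K\,p_j^{-1/(K-1)}\to(m-1)/m$, and therefore $q_j^*=1-r_j\to 1/m$.

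The main obstacle is the activation step. Showing that $\lambda_K/K$ falls below every $p_j$ is not immediate, since $\lambda_K$ itself depends on $K$. The approach above avoids tracking $\lambda_K$ directly by using the bound $r_i\ge\rho_K$ together with the normalisation identity. I will check that this step needs only $p_j>0$ and $m\ge2$. The assumption $m\ge2$ matters because $|S_K|-1\ge1$ is needed once the set is large.
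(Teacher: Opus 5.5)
Your proof is correct and uses the same mechanism as the paper: the normalisation identity $s_K\sum_{i\in S_K}p_i^{-1/(K-1)}=|S_K|-1$ on the active set, set against the inactivity condition $s_K\ge p_j^{1/(K-1)}\to1$, followed by the same full-support limit computation. The paper fixes a support, lets $K\to\infty$ along it, appeals to there being finitely many supports, and excludes a singleton support by a separate KKT argument; your uniform bound $r_i\ge(p_{\min}/p_{\max})^{1/(K-1)}$ replaces all of this and also gives an explicit $K_0$ (if all $p_j$ are equal, treat $|S_K|\rho_K\le|S_K|-1$ as an immediate contradiction rather than dividing by $1-\rho_K=0$).
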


\begin{proof}
Let $A_K:=\{j:q_j^*>0\}$ and write $r_K:=|A_K|$. The case $r_K=1$ cannot satisfy the Karush--Kuhn--Tucker conditions for $K>1$, because the active coordinate would force it to be the case that $\lambda=0$, while every inactive coordinate requires $\lambda\ge Kp_j>0$. Hence $r_K\ge2$.

For any fixed active set $A$ of size $r\ge2$, the active-coordinate formula rearranges to
\[
\left(\frac{\lambda}{K}\right)^{1/(K-1)}
=
\frac{r-1}{\sum_{i\in A}p_i^{-1/(K-1)}}.
\]
If some $j\notin A$ stayed inactive along an unbounded sequence of $K$, then the right-hand side would tend to $(r-1)/r<1$. Therefore $\lambda/K$ would tend to zero exponentially, while inactivity requires $\lambda/K\ge p_j>0$. This is impossible. Since there are finitely many supports, every coordinate is active for all sufficiently large $K$.

With full support, write
\[
a_K:=\left(\frac{\lambda}{K}\right)^{1/(K-1)}
=
\frac{m-1}{\sum_{i=1}^mp_i^{-1/(K-1)}}.
\]
Then
\[
q_j^*=1-a_Kp_j^{-1/(K-1)}.
\]
Every factor $p_j^{-1/(K-1)}$ tends to one, so $a_K\to(m-1)/m$ and $q_j^*\to1/m$.
\end{proof}

\Interp Finite exploration trades frequency for support breadth, which determines the local freedom factor in the declared permission language. As $K$ grows, the unique finite-$K$ population optima approach uniform sampling across the positive target support. The limiting hard objective itself retains only support coverage and does not select a unique density.

\begin{figure}[t]
\centering
\includegraphics[width=0.78\linewidth]{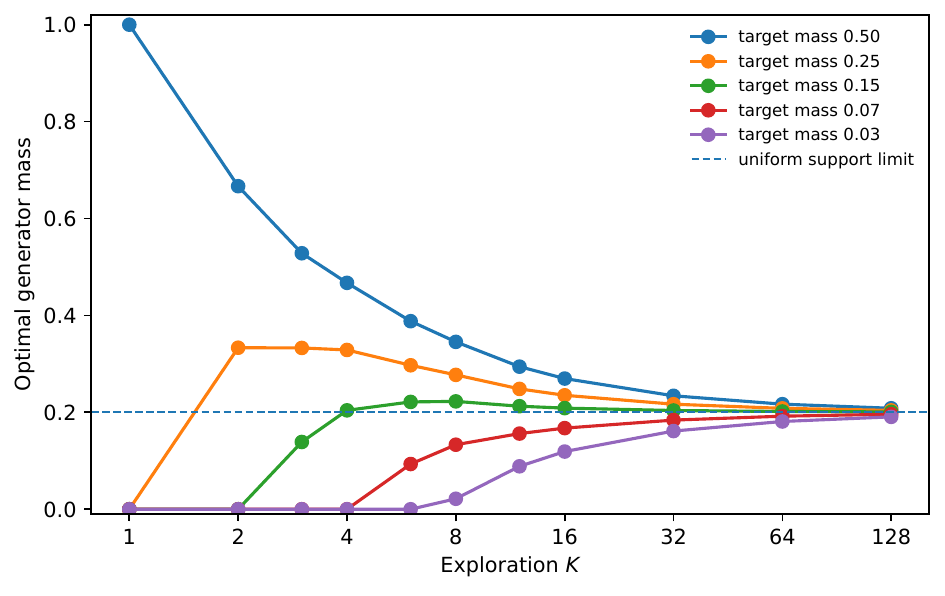}
\caption{The exact optimum from Theorem~\ref{thm:kkt} for target probabilities $(0.50,0.25,0.15,0.07,0.03)$. At $K=1$ the optimum collapses onto the most frequent target. As $K$ grows, all target cells receive mass and the optimum approaches the uniform support limit.}
\label{fig:nonuniform}
\end{figure}

\section{Future compatibility and generalisation}

Exploration concerns sampled access to permitted outputs on scored contexts. Freedom concerns refinements of the whole policy, including unseen contexts. I now connect the two to future requirements.

\begin{definition}[Observed and unseen contexts]
Let $C_\alpha\subsetneq C$ be the observed contexts and let $C_u:=C\setminus C_\alpha$. Let $V_\alpha:C_\alpha\to2^Y\setminus\{\varnothing\}$ record the observed valid outputs. A profile $F$ is child-correct when
\[
F(c)=V_\alpha(c)
\]
for every $c\in C_\alpha$.
\end{definition}

\Interp Child correctness fixes the permission profile where targets have been observed. Generalisation depends on what the policy leaves open elsewhere. Freedom is your remaining freedom to move into whatever requirements the unforeseen stipulates.

\begin{proposition}[Canonical Stack correctness]\label{prop:stackcorrect}
For each observed context $c\in C_\alpha$, let
\[
\mathfrak v_c:=\{e_{c,y}\mid y\in Y\}
\]
be the local vocabulary and let $\pi_A^c:=\{e_{c,y}\mid y\notin A\}$ for every nonempty $A\subseteq Y$. Write $\Extc{l}:=\{l'\in L_{\mathfrak v_c}\mid l\subseteq l'\}$ for local statement extension and $\Extc{X}:=\bigcup_{l\in X}\Extc{l}$ for sets of local statements. Define the local Stack task
\[
\alpha_c
:=
\left\langle
\{\varnothing\},
\Extc{\pi_{V_\alpha(c)}^c}
\right\rangle.
\]
Then $\pi_A^c$ is correct for $\alpha_c$ exactly when $A=V_\alpha(c)$. Consequently, a profile $F$ is child-correct exactly when $\pi_{F(c)}^c\in\Pi_{\alpha_c}$ for every observed context.
\end{proposition}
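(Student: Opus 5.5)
The plan is to localise Proposition~\ref{prop:freedom} to one context and then unfold the correctness condition directly. Fix $c\in C_\alpha$. The local vocabulary $\mathfrak v_c$ generates a language $L_{\mathfrak v_c}$. I will take its underlying environment to be the permission profiles $\Phi$, or equivalently the nonempty subsets of $Y$. The argument of Proposition~\ref{prop:freedom}, restricted to the single context $c$, shows that $A\mapsto\pi_A^c$ is a bijection from nonempty subsets of $Y$ onto $L_{\mathfrak v_c}$. A statement $l\subseteq\mathfrak v_c$ is satisfiable exactly when it leaves some output unexcluded at $c$. The same argument shows that $\pi_B^c$ extends $\pi_A^c$ exactly when $B\subseteq A$. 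So the extension order corresponds to reverse inclusion of permitted sets, and
\[
\Extc{\pi_A^c}=\{\pi_B^c\mid \varnothing\neq B\subseteq A\}.
\]

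Next I evaluate the task. Its input set is $\{\varnothing\}$, and $\varnothing\subseteq l$ for every local statement, so $\Extc{\{\varnothing\}}=L_{\mathfrak v_c}$. Because $\pi^c_{V_\alpha(c)}\in L_{\mathfrak v_c}$, the output set satisfies $O_{\alpha_c}\subseteq\Extc{I_{\alpha_c}}$, and $\alpha_c$ is a well-formed task. The correctness condition $\Extc{I_{\alpha_c}}\cap\Extc{\pi_A^c}=O_{\alpha_c}$ therefore reduces to
\[
\Extc{\pi_A^c}=\Extc{\pi^c_{V_\alpha(c)}}.
\]
The key step is to show that equal extensions force equal statements. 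Each statement $l$ is the least element of its own extension under $\subseteq$, since $l\in\Extc{l}$ and every member contains $l$. Equal extensions therefore give $\pi_A^c=\pi^c_{V_\alpha(c)}$. Injectivity of $A\mapsto\pi^c_A$ then gives $A=V_\alpha(c)$. The converse direction is immediate.

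The consequence follows by applying this equivalence at each observed context with $A=F(c)$, which is nonempty by the definition of a profile. Then $F(c)=V_\alpha(c)$ for all $c\in C_\alpha$, which is the definition of child-correctness, holds exactly when $\pi^c_{F(c)}\in\Pi_{\alpha_c}$ for every observed context.

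The main obstacle is bookkeeping rather than depth. The local language must be read over an environment in which the local satisfiability condition is exactly "some output at $c$ is left unexcluded." I expect this to hold over $\Phi$, because the profile values at other contexts are unconstrained. Reusing the first paragraph of the proof of Proposition~\ref{prop:freedom} with $C$ replaced by $\{c\}$ should settle it.
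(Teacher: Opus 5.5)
Your proposal is correct and follows essentially the same route as the paper. Both arguments use $\Extc{\{\varnothing\}}=L_{\mathfrak v_c}$ to reduce correctness to $\Extc{\pi_A^c}=\Extc{\pi^c_{V_\alpha(c)}}$, then use that each statement lies in (and is the least element of) its own extension to get $\pi_A^c=\pi^c_{V_\alpha(c)}$, and finish with injectivity of $A\mapsto\pi_A^c$; your extra checks (well-formedness of $\alpha_c$, the reverse-inclusion description of $\Extc{\pi_A^c}$, and reading the local language over $\Phi$) are sound but not needed for the conclusion.
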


\begin{proof}
The empty statement has every local statement as a completion, so
\[
\Extc{\{\varnothing\}}=L_{\mathfrak v_c}.
\]
Therefore $\pi_A^c$ is correct exactly when
\[
\Extc{\pi_A^c}=\Extc{\pi_{V_\alpha(c)}^c}.
\]
Equality of extensions implies equality of the statements. Each statement contains itself as a completion, so equality implies both $\pi_A^c\subseteq\pi_{V_\alpha(c)}^c$ and the reverse inclusion. The permission-profile bijection then translates this into $A=V_\alpha(c)$.
\end{proof}

\Interp The profile equality above is the canonical Stack Theory correctness predicate applied context by context. No new notion of correctness is needed.

\begin{definition}[Independent nonempty demands]
For every unseen context $c\in C_u$, draw a future demand
\[
S_c
\sim
\operatorname{Unif}\left(2^Y\setminus\{\varnothing\}\right)
\]
independently across contexts. A profile $F$ is compatible with the demands when
\[
S_c\subseteq F(c)
\]
for every $c\in C_u$.
\end{definition}

\Interp A future task adds one or more positive requirements at each new context. The profile remains compatible when it has not excluded any required output. This demand law tests compatibility with future requirements. It does not declare every permitted output correct, and it does not penalise extra unseen outputs.

\begin{theorem}[Future compatibility is proportional to freedom]\label{thm:compat}
Let $a_c:=|F(c)|$. Under independent nonempty demands,
\[
\boxed{
\Prb(F\text{ is compatible})
=
\prod_{c\in C_u}
\frac{2^{a_c}-1}{2^{|Y|}-1}
}.
\]
Define the unseen freedom factor
\[
w_u(F):=\prod_{c\in C_u}(2^{a_c}-1).
\]
Then
\[
\Prb(F\text{ is compatible})
=
\frac{w_u(F)}{(2^{|Y|}-1)^{|C_u|}}.
\]
Among child-correct profiles,
\[
\boxed{
\Prb(F\text{ is compatible})
=
\frac{w(\pi_F)}
{\left[\prod_{c\in C_\alpha}(2^{|V_\alpha(c)|}-1)\right]
(2^{|Y|}-1)^{|C_u|}}
}.
\]
Hence future compatibility probability and total freedom induce exactly the same ordering on child-correct profiles.
\end{theorem}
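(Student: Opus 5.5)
The argument has three stages: a per-context count, independence across contexts, and then the freedom formula of Proposition~\ref{prop:freedom}. First I fix one unseen context $c\in C_u$ with $a_c=|F(c)|\ge 1$. Since $S_c$ is uniform on the $2^{|Y|}-1$ nonempty subsets of $Y$, the event $S_c\subseteq F(c)$ occurs exactly when $S_c$ is a nonempty subset of $F(c)$. There are $2^{a_c}-1$ such subsets, so
\[
\Prb(S_c\subseteq F(c))=\frac{2^{a_c}-1}{2^{|Y|}-1}.
\]
Compatibility is the intersection of these events over $c\in C_u$. Because the demands are drawn independently across contexts, the probability factorises into the displayed product. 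Pulling the common denominator out of the product rewrites it as $w_u(F)/(2^{|Y|}-1)^{|C_u|}$.

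For the child-correct statement, I invoke Proposition~\ref{prop:freedom} to write
\[
w(\pi_F)=\prod_{c\in C}(2^{a_c}-1).
\]
I then split the product over the partition $C=C_\alpha\sqcup C_u$. Child-correctness forces $a_c=|V_\alpha(c)|$ for each $c\in C_\alpha$, so the observed factor is the constant
\[
\prod_{c\in C_\alpha}\left(2^{|V_\alpha(c)|}-1\right).
\]
Each of its factors is at least $1$, since each $V_\alpha(c)$ is nonempty. Hence $w_u(F)$ equals $w(\pi_F)$ divided by this constant. Substituting into the previous display gives the boxed formula.

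The ordering claim follows directly. On child-correct profiles, compatibility probability is $w(\pi_F)$ multiplied by a strictly positive constant that does not depend on $F$. A strictly increasing affine map preserves and reflects order, so the two quantities induce the same ordering.

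No step is a real obstacle; the only delicate point is bookkeeping. The counting step must use the nonemptiness convention consistently, both for $S_c$ (the denominator excludes $\varnothing$) and for $F(c)$, which guarantees $a_c\ge1$ so that each numerator is positive. One should also check that the constant depends only on the fixed data $V_\alpha$ and not on $F$. If $C_u$ were empty the statement would be degenerate, but the definition $C_\alpha\subsetneq C$ guarantees $C_u\neq\varnothing$. Even without that, the empty-product convention would make the formula hold trivially.
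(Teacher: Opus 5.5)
Your proposal is correct and follows the paper's proof essentially step for step. You count the $2^{a_c}-1$ nonempty subsets of $F(c)$ among the $2^{|Y|}-1$ nonempty demands, multiply across contexts by independence, then split the product from Proposition~\ref{prop:freedom} over $C_\alpha\sqcup C_u$ and use child-correctness to fix the observed factor (your checks on positivity of that constant and nonemptiness of $C_u$ are sound bookkeeping the paper leaves implicit).
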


\begin{proof}
At context $c$, there are $2^{|Y|}-1$ possible nonempty demands. Exactly $2^{a_c}-1$ of them are nonempty subsets of $F(c)$. Independence across contexts multiplies these per-context ratios into the first formula.

For a child-correct profile, Proposition~\ref{prop:freedom} expands the total freedom as
\[
w(\pi_F)
=
\left[\prod_{c\in C_\alpha}(2^{|V_\alpha(c)|}-1)\right]
\left[\prod_{c\in C_u}(2^{a_c}-1)\right].
\]
Substitute the unseen product from the first formula.
\end{proof}

\Interp A pooled demand law over all context-output pairs prices commitments additively and can reverse the total freedom ordering. Independent nonempty context demands recover the product structure of the embodied language, and under this positive-demand law freedom determines future compatibility exactly, up to a constant no policy can change. The theorem concerns future compatibility. Exact unseen-output correctness would require a law that also states which outputs are forbidden.

The measurable Stack Theory result is broader. Let $U$ be the unseen region of a task and let
\[
B_\pi:=\Ext{\pi}\cap U
\]
be the unseen buffer of a correct policy. Under a $\mu$-exchangeable demand law, generalisation probability is a nondecreasing function of $\mu(B_\pi)$, and strict under nondegeneracy \citep{bennett2026h,bennett2025thesis}. Correctness fixes the observed part of the extension, so ranking by buffer measure agrees with ranking by $\mu$-freedom.

In the finite permission model, under deployment consistency and the independent-demand assumptions, the full relation is
\[
K
\longrightarrow
1-(1-q)^K
\longrightarrow
Q_{\pi,c}
\longrightarrow
F_Q
\longrightarrow
\pi_{F_Q}
\longrightarrow
\Ext{\pi_{F_Q}}
\longrightarrow
\Prb(F_Q\text{ remains compatible}).
\]

\Interp The first arrow is the exact sampling transform under deployment consistency. Training and optimisation connect that transform to the learned law $Q_{\pi,c}$. The permission-language construction converts the support profile of that law into $\Ext{\pi_{F_Q}}$. The final arrow is Theorem~\ref{thm:compat}. Under the broader Stack Theory extension model, the corresponding final quantity is generalisation probability. The training step is empirical. Increasing $K$ need not enlarge the learned extension in every architecture or dataset. The reported XM experiments are consistent with that possibility but do not measure freedom directly \citep{gladstone2026explorative}.

\begin{corollary}[Exploration accesses freedom]\label{cor:bridge}
Assume a uniform target law at each scored context and a sound generator with balanced mass on its permitted outputs. Assume independent nonempty demands on unseen contexts. Then for every $K>1$
\begin{enumerate}[leftmargin=*,label=\arabic*.]
    \item local best-of-$K$ hit probability is strictly increasing in the permitted-output count $a_c$, and therefore in the local freedom factor $2^{a_c}-1$,
    \item for fixed $m=|V(c)|$, the gain from increasing $K$ is strictly larger when $a_c$ is larger, and
    \item unseen future compatibility probability is exactly proportional to total freedom among child-correct profiles.
\end{enumerate}
\end{corollary}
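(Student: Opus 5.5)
The corollary packages three earlier results, so the plan is to instantiate each under the stated hypotheses and check that they fit together. The main thing to verify is that the hypotheses in Corollary~\ref{cor:bridge} match those of the theorems being invoked. These are: a uniform target law on $V(c)$ at each scored context, a sound generator whose deployed law $Q_{\pi,c}$ is uniform on its support $F_Q(c)$ of size $a_c$, and deployment consistency, which has been in force since Section~\ref{sec:finite}. Soundness gives $F_Q(c)\subseteq V(c)$, so $1\le a_c\le m$. This is exactly the setting of Theorem~\ref{thm:balanced}.

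For item~1, I will invoke Theorem~\ref{thm:balanced} at each scored context. It gives
\[
H_K(\pi\mid c)=H_K^*(a_c)=\frac{a_c}{m}\left[1-\left(1-\frac1{a_c}\right)^K\right],
\]
and for every $K\ge2$ this is strictly increasing in $a_c$ on the integers $1\le a\le m$. The map $a\mapsto 2^a-1$ is strictly increasing, so strict monotonicity in $a_c$ is equivalent to strict monotonicity in the local freedom factor. By Proposition~\ref{prop:freedom}, that factor is the context-$c$ factor of $w(\pi_{F_Q})$.

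For item~2, I will use Theorem~\ref{thm:complement}. It gives $H_{K+1}^*(a)-H_K^*(a)=\frac1m(1-1/a)^K$, which for fixed $m$ and fixed $K\ge1$ is strictly increasing in $a$, because $1-1/a$ is strictly increasing and nonnegative. One point needs care: the corollary says ``the gain from increasing $K$'' with $K>1$. I will read this as the single-step gain from $K$ to $K+1$. Gains over larger increments $K\to K'$ are telescoping sums of single-step gains, each strictly increasing in $a$, so the statement holds for them as well.

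For item~3, I will apply Theorem~\ref{thm:compat} directly. Among child-correct profiles, the compatibility probability equals $w(\pi_F)$ divided by a constant that depends only on $V_\alpha$, $|Y|$ and $|C_u|$, which is exact proportionality. The only thing to check is that the first two items do not interfere with this one. Items~1 and~2 concern scored contexts, while item~3 concerns the unseen contexts $C_u$. In the child-correct case the supports on observed contexts are fixed at $V_\alpha(c)$. Theorem~\ref{thm:compat} needs only the support profile $F_Q$, not the mass, so balancedness plays no role in item~3.

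The main obstacle is this reconciliation. In item~3, child-correctness forces $a_c=|V_\alpha(c)|$ on observed contexts, which means the monotonicity in item~1 on those contexts is a statement about sound but possibly incomplete profiles. I will present items~1 and~2 as comparisons among sound balanced generators at a scored context, and item~3 as a statement about the unseen part of child-correct profiles. Each item then rests on its cited theorem without further computation.
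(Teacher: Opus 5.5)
Your proposal is correct and matches the paper's proof, which cites Theorem~\ref{thm:balanced} for item~1, Theorem~\ref{thm:complement} for item~2, and Theorem~\ref{thm:compat} for item~3. Your extra checks are sound refinements that the paper leaves implicit: that monotonicity in $a_c$ is equivalent to monotonicity in $2^{a_c}-1$, that the result telescopes to larger increments in $K$, and that balancedness plays no role in item~3.
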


\begin{proof}
Items 1 and 2 follow from Theorems~\ref{thm:balanced} and \ref{thm:complement}. Item 3 follows from Theorem~\ref{thm:compat}.
\end{proof}

\Interp At a scored context, exploration ranks the local extension factor through repeated samples under the symmetry assumptions. Across unseen contexts, future compatibility multiplies those same factors. Probability allocation mediates the first relation. Freedom fixes the second under the stated demand law. Exploration creates the pressure. Freedom is the quantity it acts on.

\section{Deterministic heads}

The claim that $K$ caps support holds for an enumerated set of $K$ deterministic heads. It does not hold for stochastic Forward XM.

\begin{proposition}[Deterministic special case]\label{prop:heads}
Let $f_1,\ldots,f_K:C\to Y$ be deterministic candidate functions evaluated together. Their committed set at context $c$ is
\[
F_f(c):=\{f_1(c),\ldots,f_K(c)\}.
\]
Then
\[
|F_f(c)|\le K.
\]
Consequently, the local freedom factor in the permission language obeys
\[
2^{|F_f(c)|}-1\le2^K-1.
\]
\end{proposition}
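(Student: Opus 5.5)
The bound is a cardinality statement about the image of a finite index set, so I will argue as in the first part of Proposition~\ref{prop:recoding}. Fix a context $c$. The committed set $F_f(c)=\{f_1(c),\ldots,f_K(c)\}$ is the image of $\{1,\ldots,K\}$ under the map $i\mapsto f_i(c)$. An image of a finite set under a function has at most as many elements as the domain, so $|F_f(c)|\le K$. Equality holds exactly when the heads take pairwise distinct values at $c$. Collisions among heads can only shrink the set.

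For the freedom claim, I first check that $F_f$ is a legitimate permission profile. Since $K\ge1$, each $F_f(c)$ is a nonempty subset of $Y$, so $F_f:C\to2^Y\setminus\{\varnothing\}$. Proposition~\ref{prop:freedom} then applies. Through the bijection $F\mapsto\pi_F$, the local factor contributed by context $c$ to $w(\pi_{F_f})$ is $2^{|F_f(c)|}-1$.

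The map $a\mapsto2^a-1$ is strictly increasing on the positive integers. Applying it to $|F_f(c)|\le K$ gives $2^{|F_f(c)|}-1\le2^K-1$. Taking the product over contexts also gives the global bound $w(\pi_{F_f})\le(2^K-1)^{|C|}$, although the statement only asks for the local bound.

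The only point needing care, and even this is mild, is the interpretive one. The committed set is defined as the full set of outputs the ensemble can return, with no sampling involved. This is what makes $K$ a hard ceiling here. By contrast, in stochastic Forward XM the support of $Q_c$ is not tied to the number of draws, and the candidate count bounds only the outputs observed in a given step, as in Proposition~\ref{prop:recoding}. I would note this contrast in a sentence after the bound, but no further argument is required.
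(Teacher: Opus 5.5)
Your proof is correct and takes the same route as the paper: a set formed from $K$ values has at most $K$ elements, and Proposition~\ref{prop:freedom} converts that bound into $2^{|F_f(c)|}-1\le 2^K-1$. Your check that each $F_f(c)$ is nonempty, so that $F_f$ is a genuine permission profile, and your appeal to the monotonicity of $a\mapsto 2^a-1$ are details the paper leaves implicit.
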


\begin{proof}
A set formed from $K$ values contains at most $K$ elements. Proposition~\ref{prop:freedom} converts that bound into the freedom factor.
\end{proof}

\Interp A fixed bank of $K$ heads has a $K$-output ceiling. Forward XM draws $K$ samples afresh from one conditional law. That law can have support larger than $K$.

\begin{example}[Two modes and two samples]
Let one context have two equiprobable valid outputs under identification loss. Two deterministic heads can place one output on each head and attain zero loss. A stochastic generator with probability $1/2$ on each output and $K=2$ misses the target with probability $1/4$. Its risk is therefore $1/4$ under unit identification loss. With squared distance between distinct one-hot outputs equal to two, the corresponding expected loss is $1/2$.
\end{example}

\Interp Deterministic heads and stochastic draws solve different population problems. The first enumerates candidates. The second samples from a distribution. The general XM relation is the coverage identity, not a support ceiling of $K$.

\section{Generative expressivity}

Gladstone, Ji, and Du define generative expressivity as
\[
E
:=
\sup_{p^*,c}
\sup_{\theta^*\in\arg\min_\theta\mathcal L(\theta)}
M(P_{\theta^*}(\cdot\mid c)),
\]
where $M$ counts modes \citep{gladstone2026explorative}. This is an objective-level capacity. Under their separated-mode assumptions, their smooth Forward XM result establishes $E\ge K$. It does not state that a stochastic generator has at most $K$ modes.

Freedom answers a different question. It is evaluated on a particular trained policy in a particular embodied language. Two models can share the same objective, the same $K$, and the same objective-level $E$ while inducing different support profiles and different freedom. Conversely, two distributions can have the same mode count while their extension structure differs.

The hard infinite-$K$ limit brings the distinction into focus. Theorem~\ref{thm:support} and the XM appendix both show that every support-covering generator becomes optimal in that limit \citep{gladstone2026explorative}. The objective then fixes coverage without fixing density. Freedom can still differ when the embodied language tracks further compatible refinements beyond raw support count.

\Interp Generative expressivity describes what an objective can retain. Exploration describes how many draws training examines. Freedom describes how much functional freedom one trained policy retains. These quantities can correlate. They remain distinct.

\section{Experiments}\label{sec:experiments}

The theory separates two empirical questions. The first asks whether gradient training converts a larger exploration budget into a weaker deployed policy. The second asks whether freedom improves selection among models trained by XM. Experiment 1 tests the first relation. Experiment 2 tests the second. Code, frozen specifications, and result tables accompany this manuscript.

\subsection{Experiment 1. Exploration raises freedom}

I used a latent-conditioned continuous generator with twelve one-hot output prototypes. The full profile fixed input dimension eight, latent dimension twelve, network width 128, 16,384 training cases, 4,096 audit contexts, batch size 384, 2,500 updates, twelve paired repetitions, and
\[
K\in\{1,2,4,8,16,32\}.
\]
At each update, the code drew $K$ Gaussian latent candidates for every target, generated $K$ continuous output vectors, computed mean squared reconstruction loss, retained the lowest-loss candidate, and recomputed only that candidate with gradients. This follows the public Forward XM winner-only update. The analytic categorical loss used in the earlier experiment is absent.

Within each repetition, every value of $K$ used the same training set, target law, initial parameters, minibatch stream, audit contexts, and initial candidate random-number state. The two target laws were uniform and context-dependent. Every target cell was admissible. A generated vector outside every declared cell was counted as invalid rather than as another output.

At deployment, I drew 240 fresh latent samples at each audit context. A generation occupied output cell $j$ when its nearest one-hot prototype lay within radius $0.64$. The active cells defined the deployed permission profile
\[
F_\theta(x):=\{j:\text{cell }j\text{ is occupied by at least one deployment draw}\}.
\]
Writing $a_\theta(x):=|F_\theta(x)|$, the measured local freedom factor was $2^{a_\theta(x)}-1$. I report the audit mean of $\log(2^{a_\theta(x)}-1)$. An empty operational profile lies outside the nonempty permission language, so it received the minimum score zero and its frequency was recorded separately.

The full study completed 144 of 144 jobs. Figure~\ref{fig:eaw1} shows the primary freedom result. Relative to $K=1$, the measured permission profile broadened from $K=4$ onward under uniform targets and from $K=2$ onward under the context-dependent law. The uniform profile reached all twelve cells by $K=16$ and then saturated.

\begin{figure}[t]
\centering
\includegraphics[width=.78\linewidth]{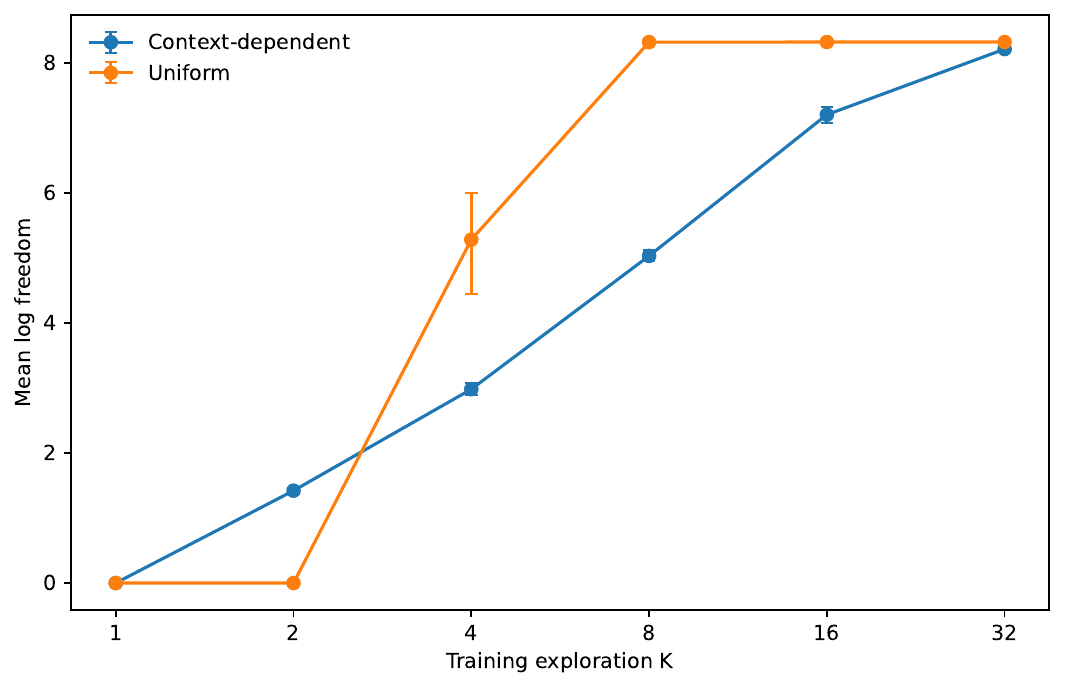}
\caption{Experiment 1. Mean log freedom after sample-based Forward XM training. Points show means across twelve paired repetitions. Error bars are 95 percent bootstrap intervals.}
\label{fig:eaw1}
\end{figure}

\begin{table}[t]
\centering
\small
\begin{tabular}{llrrr}
\toprule
Target law & $K$ & Best-of-8 hit & Active outputs & Mean log freedom \\
\midrule
Uniform & 1  & 0.000000 & 0.000000 & 0.000000 \\
Uniform & 8  & 0.342462 & 11.998739 & 8.316647 \\
Uniform & 32 & 0.440817 & 12.000000 & 8.317522 \\
Context-dependent & 1  & 0.319796 & 0.572815 & 0.000179 \\
Context-dependent & 8  & 0.711196 & 7.274760 & 5.029522 \\
Context-dependent & 32 & 0.648870 & 11.843648 & 8.209101 \\
\bottomrule
\end{tabular}
\caption{Experiment 1 cohort means. The table shows $K=1$, $K=8$, and $K=32$. Figure~\ref{fig:eaw1} includes every tested value.}
\label{tab:eaw1}
\end{table}

Under uniform targets, $K=8$ increased mean log freedom over $K=1$ by $8.316647$, with paired 95 percent interval $[8.316281,8.316958]$. At $K=32$ the increase was $8.317522$. Under context-dependent targets, the corresponding increases were $5.029343$, with interval $[4.953404,5.112625]$, and $8.208922$, with interval $[8.198358,8.220584]$. Every one of the twelve paired differences was positive in these four comparisons. The uniform $K=2$ condition tied $K=1$ in log freedom. From $K=4$ onward every uniform paired difference was positive. Under the context-dependent law every comparison from $K=2$ onward was positive.

Best-of-eight hit and freedom separated under the context-dependent law. Hit rose from $0.319796$ at $K=1$ to $0.711196$ at $K=8$, then fell to $0.648870$ at $K=32$, while active support and freedom continued to rise. This is the probability-mass mediation predicted by the theory. A broader permission profile does not guarantee greater finite-$K$ access when sampling mass becomes less favourable.

The future-demand check drew independent nonempty output demands at each audit context. Observed compatibility differed from the extension-count prediction by $2.06\times10^{-4}$ on average and $2.19\times10^{-3}$ at most.

\Interp Experiment 1 closes the training arrow in this finite operational language. The public XM update changed the deployed permission profile. Freedom measured that change, while the context-dependent condition also showed why freedom and finite-$K$ access remain distinct.

\subsection{Experiment 2. Freedom-selected XM beats child-validation XM}

Experiment 2 tested whether freedom improves selection among models trained by the same sample-based Forward XM procedure. Each world used twelve decoded output cells. I inserted one child example of every output, then sampled the remaining child labels from a randomly permuted Zipf law with exponent $1.6$. The parent extension retained the observed child slice and added new contexts drawn from the same context law, with those twelve outputs weighted uniformly on the added slice. The design therefore changed demanded output frequency rather than introducing outputs absent from training.

Each candidate was trained by winner-only Forward XM. Every update drew $K$ Gaussian latent candidates, generated one continuous output from each latent, measured reconstruction loss against a one-hot target, selected the minimum-loss candidate, and recomputed only that candidate with gradients. The pool used
\[
K\in\{4,8,16,32\},
\]
four paired families, and checkpoints after 1,250, 1,875, and 2,500 updates. This yielded 48 candidates per world. Every continuous generation was decoded to its nearest one-hot cell, so the twelve Voronoi cells formed a total finite output partition.

Both selectors read the same XM-trained candidate pool. Child-validation XM selected the candidate with greatest long-tail child best-of-eight hit. Freedom selection was transductive. It evaluated each candidate on 256 unlabelled parent contexts, retained candidates whose child hit lay within a prospectively selected fraction of the best child hit, and ranked them using a fitted freedom instrument. The primary freedom coefficient was fixed at one. Nonnegative bounded residual terms refined the operational estimate. Two independent calibration streams fitted and tested in opposite directions across 24 calibration worlds. Formula-bank size was selected from 48, 120, and 240. The child-fit gate was selected from $0.97$, $0.98$, and $0.99$. The final instrument was frozen before the thirty final worlds were evaluated.

The frozen instrument used the 240-formula bank and a $0.97$ child-fit gate. Its cross-stream mean calibration gain was $0.068139$. Its structural rate was $0.9167$, exceeding the predeclared $0.80$ requirement that calibration worlds contain at least two eligible candidates separated by one unit of measured log freedom. The balanced-parent hit statistic was excluded from final selection. It was intentionally allowed to read output-profile features computed on unlabelled parent contexts, including the measured freedom score that defined the intervention.

The primary endpoint was best-of-eight hit on the added, balanced parent slice. Child hit was reported separately and was not averaged into the parent endpoint. Across thirty final worlds whose parent-hit statistic was excluded from selection, freedom selection raised mean parent hit from $0.316942$ to $0.389181$. The paired increase was
\[
0.072239
\]
with paired bootstrap 95 percent interval $[0.066296,0.076127]$. Freedom selection improved 29 of 30 worlds and lost one. The exact one-sided paired sign-flip value was $1.86\times10^{-9}$.

\begin{figure}[t]
\centering
\includegraphics[width=.78\linewidth]{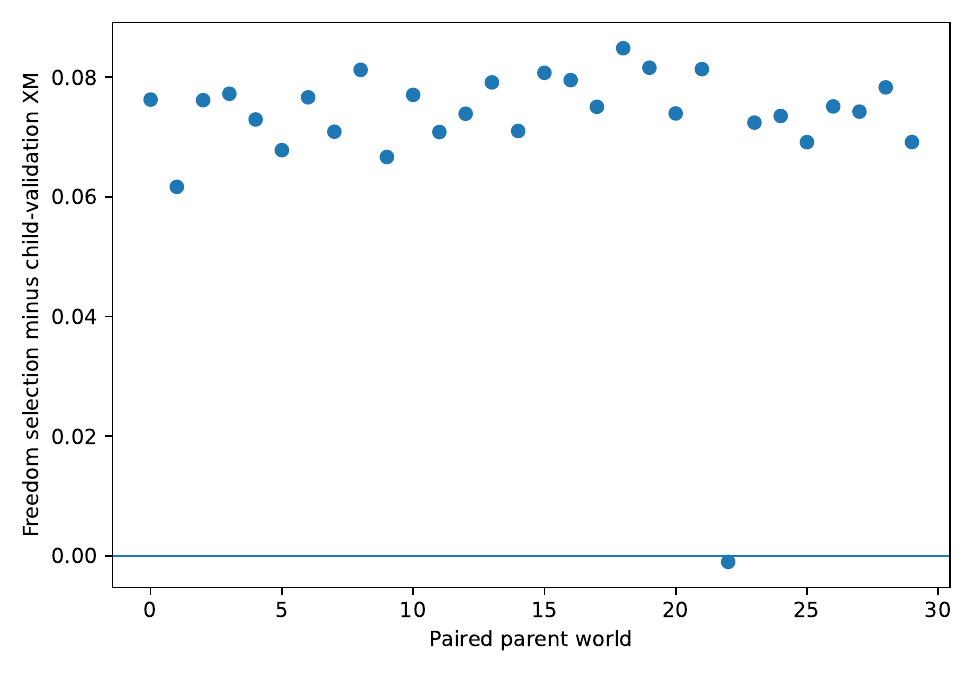}
\caption{Experiment 2. Paired balanced-parent improvement from prospectively calibrated freedom selection over child-validation XM. Freedom improved 29 of 30 final worlds whose parent-hit statistic was excluded from selection.}
\label{fig:eaw2}
\end{figure}

\begin{table}[t]
\centering
\small
\begin{tabular}{lrrr}
\toprule
Quantity & Child-validation XM & Freedom-selected XM & Difference \\
\midrule
Balanced-parent best-of-8 hit & 0.316942 & 0.389181 & $+0.072239$ \\
Child best-of-8 hit & 0.787159 & 0.765462 & $-0.021698$ \\
Active parent outputs & 5.846094 & 8.531641 & $+2.685547$ \\
Mean parent log freedom & 4.033112 & 5.910419 & $+1.877307$ \\
\bottomrule
\end{tabular}
\caption{Experiment 2 final cohort. Freedom selection accepted a small reduction in long-tail child hit and selected XM models with broader valid repertoires and higher balanced-parent hit.}
\label{tab:eaw2}
\end{table}

Freedom selection accepted a $0.021698$ reduction in long-tail child hit while increasing active parent support by $2.685547$ cells and parent log freedom by $1.877307$. Both selectors had access to the same precomputed XM candidate pool, but their selected runs were not individually compute-matched. Child validation selected $K=8$ in all thirty worlds. Freedom selected $K=16$ in 29 worlds and $K=8$ in one. It selected earlier checkpoints on average, $1{,}708$ rather than $2{,}333$ updates, while the mean quantity $K\times\text{updates}$ was $26{,}667$ rather than $18{,}667$. The primary result is therefore a common-pool model-selection comparison, rather than an equal-training-compute comparison between two individual runs.

As a post hoc matched-setting check, I restricted both selectors to the six combinations with $K\in\{16,32\}$ and one fixed checkpoint in $\{1250,1875,2500\}$. Within each combination, both selectors compared the same four family replicates at identical $K$ and update count. Averaging the six paired gains within each world gave $+0.006455$, with paired bootstrap 95 percent interval $[0.005798,0.007079]$. The average was positive in all thirty worlds. This analysis was not predeclared and is reported only as a robustness check.

Child-validation XM preferred candidates specialised to frequent child outputs. Freedom selection chose candidates within three percent of the best child score that retained broader valid repertoires. The balanced parent rewarded those retained possibilities.

\Interp Experiment 2 shows that transductive freedom-enhanced selection can outperform child-validation selection over the same XM candidate pool under a declared frequency shift. Experiment 3 tests freedom-guided selection inside the XM update itself.\footnote{Experiment 3 is in progress using the authors' public XJumpy ImageNet implementation. No result from that experiment is included here.}

\subsection{What the experiments establish}

Experiment 1 identifies a function-level quantity changed by $K$. Experiment 2 uses that quantity to improve selection among XM models. Together they separate mechanism, measurement, and intervention.
\[
K
\longrightarrow
\text{sampled access}
\longrightarrow
\text{deployed permission profile}
\longrightarrow
w(\pi)
\longrightarrow
\text{balanced-parent model selection}.
\]
The first experiment measures the middle relation under sample-based gradient training. The second shows that, among models trained by the same XM procedure, prospectively calibrated freedom selection can outperform child-validation selection when future output frequencies differ. The primary comparison is transductive and common-pool rather than individually compute-matched. Exploration is the mechanism. Freedom is the axis it exposes.

\section{Further predictions}

The theory and experiments suggest several further tests.

\begin{enumerate}[leftmargin=*,label=\arabic*.]
    \item Measure $K$, output support, probability allocation, and freedom separately. A mode count alone cannot test the relation in Example~\ref{ex:skew}.
    \item At fixed correctness, fixed valid-output count, uniform targets, and matched probability balance, models with greater measured local freedom should gain more from increasing $K$. This follows from Theorem~\ref{thm:complement}.
    \item At fixed $K$, mass allocation should approach the $K$-dependent optimum in Theorem~\ref{thm:kkt}. Entropy can balance access across a fixed permitted set, but entropy and freedom agree only under the symmetry conditions of Proposition~\ref{prop:entropy}.
    \item As $K$ grows under nonuniform targets, the nonparametric population optimum should broaden toward full target support. Figure~\ref{fig:nonuniform} plots the exact finite identification prediction. Optimisation error can prevent a trained model from reaching that optimum.
    \item The interaction with model and data scale should depend on whether those axes enlarge the correct permitted region the model can embody. Exploration has little to distinguish when that region is small. It gains value when the model can retain many correct outputs.
    \item A mediation test can distinguish the explanation. If exploration improves generalisation through freedom, the direct association between $K$ and held-out performance should fall after measured freedom is included.
\end{enumerate}

Experiment 1 and Experiment 2 test these relations in a finite output partition where support and completion counts are tractable. Dense softmax, image, video, and language models require a declared operational output vocabulary, a predeclared admissibility rule, or a measured completion proxy. The language and admissibility rule must be fixed before comparisons are drawn.

\section{Related work}

Minimum-over-candidates training has several distinct antecedents. Lloyd quantisation gives the classical centroid geometry behind deterministic squared-loss heads \citep{lloyd1982pcm}. Multiple Choice Learning trained several deterministic structured predictors under an oracle loss \citep{guzmanrivera2012mcl}. Stochastic Multiple Choice Learning adapted this winner-takes-gradient rule to deep ensembles \citep{lee2016smcl}.

Fan, Su, and Guibas introduced Min-of-N for conditional point-cloud generation in a December 2016 preprint, later published at CVPR 2017 \citep{fan2017pointset}. For each input image, they drew $n$ Gaussian latent perturbations, generated $n$ candidate point clouds, and minimised the distance from the target to the closest candidate. This has the same conditional stochastic minimum-over-candidates form as hard Forward XM, though it was developed for one conditional generation problem rather than as a general scaling framework.

IMLE later drew a shared global pool of model samples, matched every datum to its nearest sample, and adjusted the generator through those pairs \citep{li2018imle}. It shares Forward XM's data-to-nearest-generated-sample direction while differing in sample organisation. IMLE also established a maximum-likelihood interpretation under stated conditions. The XM paper retrospectively classifies IMLE as a shared-global-pool instance of end-to-end Forward XM \citep{gladstone2026explorative}.

The XM paper explicitly states that it does not claim the minimum-over-candidates update itself. Its contribution is the broader Explorative Modeling framework. It integrates the update across modern generative architectures, distinguishes Forward and Reverse XM, defines generative expressivity, and studies exploration as a scaling variable across modalities, model sizes, data scales, compute budgets, and generalisation \citep{gladstone2026explorative}. The present paper identifies conditions under which sampled output coverage orders Stack Theory freedom and separates extension size from probability allocation.

Freedom was introduced as a generalisation criterion in 2023 \citep{bennett2023b}. Later work separated it from simplicity \citep{bennett2024b}, repaired and restated the finite optimality result and consolidated the framework in the associated thesis \citep{bennett2025thesis}, and extended freedom to measured continuous languages and neural functions \citep{bennett2026h}. The chronology is relevant. The least-commitment explanation of generalisation came first. Explorative Modeling is a new framework and scaling study built around an older stochastic minimum-over-candidates update. Its deployment-consistent form can reward sampled coverage of policy permissions.

\section{Limitations}

The power-form coverage identity concerns candidates that are conditionally i.i.d. given the context and target. Remark~1 sets out the corresponding joint miss probability for coupled candidates. The identity permits a target-dependent training kernel. The bridge from that kernel to deployed freedom additionally requires deployment consistency. End-to-end XMs meet that condition by construction. The noise-searching continuous hybrids in the XM paper condition each candidate on a different corruption of the datum, so their training kernel need not equal the deployed law. Gladstone, Ji, and Du also leave that coupling-search case for later analysis \citep{gladstone2026explorative}.

The smooth kernel objective studied in the XM appendix has a mixture-likelihood interpretation and retains density information \citep{gladstone2026explorative}. The support limit does not transfer unchanged to that objective.

The closed-form freedom result uses a finite exclusion vocabulary. It treats output cells symmetrically. Richer embodied languages can distinguish two supports of equal cardinality through their refinement structure. In those languages, mode count and freedom separate further.

Raw support can saturate. A finite softmax assigns positive probability to every token. Many continuous generators assign positive density throughout their ambient output space. In either case the raw permission profile is constant and cannot rank models. Support freedom is also discontinuous when an output moves between zero and arbitrarily small positive mass. Empirical work therefore needs a declared finite vocabulary with a predeclared operational admissibility rule, a task-relevant measurable language, or a measured extension proxy. Changing that language or admissibility rule changes the numerical freedom value, so comparison requires one fixed embodied language and rule.

Experiments 1 and 2 measure freedom in fixed finite output partitions. They establish the training and model-selection effects in those settings. Architecture, coupling, optimisation, data, and the chosen embodied language can alter the effect elsewhere. Dense image, video, and language generators still need a fixed operational vocabulary or a measured completion proxy before the same claim can be tested.

Experiment 2 was designed to expose a long-tail-to-uniform frequency shift. Its freedom selector read unlabelled parent contexts, while the child-validation selector did not. The same 240 deployment draws were used to form the selector's parent-context features and the reported parent-hit estimate, so selection and endpoint noise were not independent. The primary selectors could also choose different values of $K$ and different checkpoints. The result therefore establishes a transductive common-pool selection advantage in this synthetic shift. It does not establish an equal-compute improvement to the XM training update. The post hoc matched-setting check reduces the compute concern, but an independent replication should split selector-probe draws from audit draws and predeclare fixed-$K$, fixed-checkpoint comparisons.

Finally, best-of-$K$ can alter optimisation even when support coverage is no longer limiting. Gladstone, Ji, and Du report gains under weak multimodality and hypothesise reduced gradient conflict \citep{gladstone2026explorative}. The present theory does not exclude that route. It isolates the completion-volume route and states when it is exact.

\section{Conclusion}

Exploration and freedom occupy different levels of explanation. Exploration is repeated candidate sampling. Freedom is compatible completion volume. Probability mass connects them only after a deployed policy and embodied language have been fixed.

Best-of-$K$ training transforms the mass of every acceptable region by
\[
q\longmapsto1-(1-q)^K.
\]
Under identification loss and uniform finite targets, this transform equals expected distinct valid-output coverage. Under uniform targets and balanced mass, coverage rises strictly with the local freedom factor. For fixed target-support size, weaker local policies gain more from another sample. Under nonuniform targets, the sequence of finite-$K$ population optima trades frequency against support and approaches uniform mass over full target support. Under independent uniform nonempty demands, future compatibility probability is exactly proportional to unseen freedom.

The experiments complete the argument within their declared finite languages. Across 144 faithful sample-based Forward XM runs, larger $K$ increased or saturated the measured deployed permission profile under uniform targets and increased it under the context-dependent law. In thirty final long-tail-to-balanced worlds, prospectively calibrated freedom selection raised balanced-parent hit from $0.316942$ to $0.389181$ over child-validation XM. It won in 29 worlds, selected 2.686 more active parent outputs, and increased mean parent log freedom by 1.877 while accepting a 0.0217 reduction in long-tail child hit. This was a transductive common-pool selector result. The individual selected runs were not compute-matched.

The third pretraining axis therefore scales sampled access to functional freedom when training and deployment use the same candidate law. Parameters and data may enlarge the permitted region a model can embody. Exploration rewards sampled coverage of that region. The fixed embodied language converts the resulting permission profile into freedom. Parameters decide representation. Data decide learning. Freedom decides future compatibility, and exploration is how training comes to care about it.

Explorative Modeling operationalises an older invariant. The third axis has always been freedom.

\appendix
\section{Verification}

The accompanying script \texttt{fig.py} verifies the finite formulas by enumeration or numerical optimisation and regenerates Figures~\ref{fig:balanced} and \ref{fig:nonuniform}. It checks balanced-support monotonicity, marginal complementarity, entropy balancing, the Karush--Kuhn--Tucker solution against random simplex points, the expected-distinct identity, compatibility proportionality, and Example~\ref{ex:skew}. The script \texttt{check.py} independently recomputes every numerical claim added in the experiment section, including the exact sign-flip value and the matched-setting robustness check. The folders \texttt{eaw1} and \texttt{eaw2}\footnote{Codenames for experiments 1 and 2, now meaningless letter salad.} contain the Colab notebooks, code, and result tables for Experiments 1 and 2.

\section*{Acknowledgement}

A large language model assisted with literature search, simulated reviews, LaTeX prep and project organisation.

\bibliographystyle{plainnat}
\bibliography{master_bibliography}

\end{document}